\documentclass[twocolumn]{autart}    

\usepackage{graphicx}   
\usepackage{amsmath}    
\usepackage{amssymb}    
\usepackage{float}      

\usepackage{verbatim} 

\DeclareMathOperator{\Tr}{Tr} 

\begin{document}

\begin{frontmatter}
\runtitle{Stochastic Port-Hamiltonian Neural Networks}  

\title{Stochastic Port-Hamiltonian Neural Networks: \\ Universal Approximation with Passivity Guarantees} 

\author[UVER]{Luca Di Persio}\ead{luca.dipersio@univr.it},  
\author[BUW]{Matthias Ehrhardt}\ead{ehrhardt@uni-wuppertal.de},         
\author[UTR]{Youness Outaleb}\ead{youness.outaleb@unitn.it} 

\address[UVER]{Department of Computer Science - College of  Mathematics\\ University of Verona, Strada le Grazie 15 - 37134 Verona, Italy}  
\address[BUW]{University of Wuppertal, Chair of Applied and Computational Mathematics,\\
Gau\ss{}strasse 20, 42119 Wuppertal, Germany}             

\address[UTR]{Doctoral School in Mathematics, University of Trento\\
Via Sommarive, 14 - 38123 Povo, Italy}  

\begin{keyword}                           
(Discrete) stochastic port-Hamiltonian system; port-Hamiltonian Neural Networks; 
Passivity; Interconnection.              
\end{keyword}                             

\begin{abstract}   
   Stochastic port-Hamiltonian systems represent open dynamical systems with dissipation, inputs, and stochastic forcing in an energy based form. 
   We introduce stochastic port-Hamiltonian neural networks, SPH-NNs, which parameterize the Hamiltonian with a feedforward network and enforce skew symmetry of the interconnection matrix and positive semidefiniteness of the dissipation matrix. 
   For It\^{o} dynamics we establish a weak passivity inequality in expectation under an explicit generator condition, stated for a stopped process on a compact set. 
   We also prove a universal approximation result showing that, on any compact set and finite horizon, SPH-NNs approximate the coefficients of a target stochastic port-Hamiltonian system with $C^2$ accuracy of the Hamiltonian and yield coupled solutions that remain close in mean square up to the exit time. 
   Experiments on noisy mass spring, Duffing, and Van der Pol oscillators show improved long horizon rollouts and reduced energy error relative to a multilayer perceptron baseline.
\end{abstract}

\end{frontmatter}

\section{Introduction}\label{sec:introduction}
Recently, there has been growing interest in using Hamiltonian-based frameworks to model complex physical systems, especially in the field of machine learning for dynamical systems. 
These approaches are appealing because they integrate structural and physical constraints, such as energy conservation and passivity, into data-driven models. 
However, despite their universal approximation capabilities, traditional \textit{neural networks} (NNs) cannot generally enforce fundamental physical laws.
Consequently, they may produce predictions that violate energy conservation or stability constraints, resulting in unphysical behaviors such as uncontrolled energy growth or dissipation in conservative systems.

\subsection{Learning Physics Under Uncertainty}
There is a particular need for data-efficient and physically reliable learning algorithms in domains where safety, interpretability, and extrapolation are essential.  
Examples range from robotics and autonomous systems, where inaccurate energy accounting can lead to catastrophic controller failures, to molecular simulation, where violations of stochastic energetics distort statistical observables.  

In these applications, two goals emerge: learned models should respect intrinsic geometric or energetic properties (e.g., symplecticity, skew-symmetry, positive definiteness, and passivity) that are known a priori. 
However, environmental disturbances, parameter variability, and measurement noise must also be taken into account.  
Deterministic models alone are insufficient when uncertainty alters qualitative behavior.

Port-Hamiltonian theory provides a unifying language that fulfills both criteria.  
It builds upon classical Hamiltonian mechanics by explicitly distinguishing between storage (energy), dissipation (entropy production), and interaction (ports) via Dirac structures.  
When Brownian noise is present, \textit{stochastic port-Hamiltonian systems} (SPHS) are obtained, in which noise enters through additional stochastic ports.  
However, only a few machine learning papers have addressed the full stochastic, energy-aware setting to date.

\subsection{Structure-Preserving Machine Learning}
Many techniques have been proposed to embed physical structure into learning algorithms.  
One such technique is \textit{physics-informed neural networks} (PINNs), which constrain the solutions of partial differential equations (PDEs) by adding the residuals of the governing equations to the loss function \cite{raissi2019physics,karniadakis2021physics}.  
However, vanilla PINNs are not well-suited for long-time integration of Hamiltonian flows because they do not guarantee symplecticity.
\textit{Hamiltonian neural networks} (HNNs) address this issue for conservative, deterministic systems by learning a scalar Hamiltonian $H_\theta$, whose gradients reproduce Hamilton’s equations  \cite{greydanus2019hamiltonian}.

\textit{Port-Hamiltonian neural networks} (PHNNs) are an extension of HNNs that allow for non-conservative interconnections and damping matrices.  
The early work of van der Schaft and Jeltsema \cite{van2014port} inspired data-driven identification methods. 
Regarding model reduction, Breiten and Schulze proposed a linear-quadratic Gaussian (LQG) balanced truncation method for linear port-Hamiltonian descriptor systems that preserves the port-Hamiltonian structure \cite{breiten2025structure}. 
While not a learning algorithm, this method demonstrates how to combine stochastic optimal control ideas with structure preservation, offering a complementary approach to noise-aware modeling.  
However, these contributions typically address deterministic settings and often either overlook noise or add it after training. 
For an overview of recent machine learning techniques for deterministic port-Hamiltonian systems (PHSs), 
see Cherifi \cite{cherifi2020overview}.

\subsection{Related Work on Stochastic Dynamics}\label{subsec:stochastic_related_work}
Over the last five years, the intersection of \textit{stochastic differential equations} (SDEs) and deep learning has given rise to a vibrant research field. 
This field is driven by the need to model noisy physical systems, financial time series, and irregular biomedical records.
Below, we review the relevant literature on structure preservation, emphasizing how each method addresses -- or fails to address -- energy consistency, passivity, and uncertainty.

In their work, Kidger and Lyons \cite{kidger2020neural} cast the drift and diffusion of an It\^{o} SDE as neural networks trained by stochastic adjoint methods, forming the Neural SDE framework. 
Subsequent refinements have improved training stability by expanding state spaces \cite{dupont2019augmented} and by using robust solvers such as Milstein nets \cite{zhang2023milstein}.
However, these black-box approaches, while expressive, disregard the geometric constraints of Hamiltonian flows.

Inspired by deterministic HNNs, Kong \textit{et al.} \cite{kong2020sde} introduced stochastic HNNs (SHNNs), which add isotropic Langevin noise to the canonical equations.  
Di Persio \textit{et al.} \cite{dipersio2025porthamiltonian} developed the Port-Hamiltonian Neural Network (PHNN) framework and extended this concept by providing a systematic way to model both energy dissipation and external interactions (ports).
This overcomes the limitations of gradient-based damping and enables the simulation of interconnected stochastic systems.   
These Hamiltonian-inspired architectures are complemented by \textit{stochastic PINNs} (sPINNs) \cite{o2022stochastic}, which extend the PINN residual loss to stochastic dynamics. 
However, residual-based training alone does not guarantee structure preservation
(e.g., passivity or energy consistency) without additional constraints.

Score-based generative models offer an alternative approach. By training a time-reversed SDE to match the data distribution, one obtains a controllable forward SDE. 
Zhang \textit{et al.} \cite{song2020score} embedded port-Hamiltonian priors into the score network via energy regularizers.
Their sampler enforces passivity only in distribution. 
In contrast, our parameterization (Section~\ref{sec:sphnn}) enforces the structural constraints $J=-J^\top$ and $R\succeq 0$ exactly, and admits localized weak-passivity bounds under an explicit generator inequality (Corollary~\ref{cor:Weak_passivity}).

Current stochastic-dynamic learning methods either ignore the energy structure, are restricted to conservative noise models, or treat passivity as an afterthought.  
The proposed SPH-NN, which integrates 
port-Hamiltonian geometry with flexible neural parameterization, improves long-horizon rollouts and energy-related metrics on the benchmarks considered here relative to a standard MLP baseline.

\subsection{Present Work}
This paper introduces the Stochastic Port-Hamiltonian Neural Network (SPH-NN),
which closes the existing gap. 
Starting from the axiomatic definition of SPHS \cite{cordoni2022stochastic,cordoni2023weak},
we parameterize the Hamiltonian $H_\theta$ with a feedforward network and construct the drift
in port-Hamiltonian form.
In the deterministic case this yields the standard energy balance
\begin{equation*}
    \dot{H}_\theta = u^\top y_\theta - \nabla H_\theta^\top R_\theta \nabla H_\theta.
\end{equation*}
In the It\^{o} stochastic case, the expected energy involves an additional It\^{o} correction
$\tfrac{1}{2}\Tr(\sigma\sigma^\top \nabla^2 H_\theta)$, and weak passivity follows under an
explicit generator inequality (Corollary~\ref{cor:Weak_passivity}).

The dissipative co-matrix $R(x)$ and the interconnection matrix $J(x)$ are assigned to parameterizations that are tractable and maintain positive semi-definiteness and skew-symmetry, respectively. 
The stochastic signature $\sigma(x)$ can be prescribed or learned, enabling fidelity to noisy data.

The result is a learned SDE that retains the port-Hamiltonian structural constraints exactly and admits weak-passivity bounds in expectation (Corollary~\ref{cor:Weak_passivity}), together with universal approximation guarantees (Theorem~\ref{thm:Structured_UAT_SPHS}).  
Figures~\ref{fig:phase_space_mass_spring} and \ref{fig:mse_rollout_mass_spring}
illustrate the benefits of conservative and damped systems affected by nontrivial noise.

\subsection{Contributions}
This paper makes three contributions.
First, we present the first end-to-end differentiable architecture that incorporates the complete stochastic port-Hamiltonian formalism formulated in the It\^{o} sense within a neural network.
Our construction guarantees skew-symmetry and positive semi-definiteness by design, and we study weak passivity in the sense of Definition~\ref{def:strong-weak-passivity} (see Corollary~\ref{cor:Weak_passivity_strict}, Proposition~\ref{prop:Weak_passivity_global}, and Section~\ref{sec:experiments}).

Second, we prove a universal approximation theorem on compact sets: for any compact $\mathcal{K}$ and finite horizon $T$, the SPH-NN parameterization approximates the SPHS coefficients (including $C^2$-accuracy of $H$ on $\mathcal{K}$) and the corresponding coupled solutions are close up to the exit time from $\mathcal{K}$ (Theorem~\ref{thm:Structured_UAT_SPHS}).
We furthermore derive weak-passivity bounds in expectation via an explicit generator inequality:
a localized bound on $\mathcal{K}$ for the stopped process, including an additive term that captures the It\^{o} correction and approximation error (Corollary~\ref{cor:Weak_passivity}), a strict localized version without the additive term under $r\le -\delta$ on $\mathcal{K}$ (Corollary~\ref{cor:Weak_passivity_strict}), and a non-localized weak-passivity result under the additional global condition $\hat{r}\le0$ on $\mathbb{R}^n$ plus integrability assumptions (Proposition~\ref{prop:Weak_passivity_global}).
To the best of our knowledge, this is the first rigorous approximation result coupling stochastic dynamics with a port-Hamiltonian structure.

Third, using canonical benchmarks, we demonstrate that SPH-NNs achieve lower long-term energy drift and more accurate rollouts than \textit{multilayer perceptrons} (MLPs).
In this context, a rollout starts from an initial condition and simulates the learned dynamics forward over many time steps to produce a predicted trajectory, and the \textit{rollout error} quantifies the deviation of this trajectory from the true one over the time horizon.

\subsection{Organization of the Paper}
Section~\ref{sec:theoreticalbackground} reviews the port‑Hamiltonian and stochastic foundations.  
Section~\ref{sec:sphnn} introduces the SPH‑NN architecture and training procedure.  
Section~\ref{sec:experiments} presents numerical experiments and ablation studies.  
Section~\ref{sec:discussion} discusses limitations and perspectives, and Section~\ref{sec:conclusion} concludes.


\section{Theoretical Background} \label{sec:theoreticalbackground}
This section introduces the foundational concepts that underpin our proposed approach. 
First, we review port-Hamiltonian systems.

\subsection{Port-Hamiltonian Systems}
\textit{Port-Hamiltonian systems} (PHSs) are a powerful, structured framework for modeling physical systems involving energy storage, dissipation, and environmental interaction. 
PHSs are particularly well-suited for multiphysics applications where different physical domains are interconnected via energy exchanges.

A standard PHS is described by the following state-space representation:
\begin{equation}\label{eq:PHS}
    \dot{x} = \bigl[J(x) - R(x)\bigr] \nabla H(x) + G(x) u.
\end{equation}
Here, $x \in \mathbb{R}^n$ denotes the state vector, $H(x)\colon\mathbb{R}^n \to \mathbb{R}$ is the Hamiltonian function representing the total stored energy in the system, $J(x) = -J(x)^\top$ is a skew-symmetric matrix encoding the structure of the internal energy-conserving interconnections, 
and $R(x)$ is a positive semidefinite matrix modeling dissipative effects. 
The influence of external inputs $u\in\mathbb{R}^m$ is captured by the control input matrix $G(x)$.

A key property of PHSs 
is \textit{passivity}, which ensures that the system cannot generate energy internally. 
Instead, the system can only store, dissipate, or exchange energy with its environment. 
Passivity plays a central role in analyzing system stability and designing robust, physically consistent controllers. 
Port-Hamiltonian models naturally extend to networked and distributed systems, making them valuable tools for theoretical analysis and practical engineering applications.

\subsection{Stochastic Extension}
Uncertainties, measurement noise, and unmodeled external disturbances can significantly impact the dynamics of many real-world systems. 
To account for these effects, the port-Hamiltonian framework can be extended to include stochastic perturbations, resulting in a \textit{stochastic port-Hamiltonian system} (SPHS) \cite{cordoni2022stochastic}.

A typical SPHS is modeled by introducing a stochastic term driven by a Brownian motion $W_t$ into the deterministic formulation \eqref{eq:PHS}:
\begin{equation}\label{eq:SPHS}
\begin{split}
dX_t
&= \bigl[J(X_t)-R(X_t)\bigr]\nabla H(X_t)\,dt \\
&\qquad + g(X_t)u_t\,dt + \sigma(X_t)\,dW_t,
\end{split}
\end{equation}
where $X_t\in\mathbb{R}^n$ denotes the system state at time $t$, 
$J(X_t)$ is a skew-symmetric interconnection matrix, 
$R(X_t)$ is a positive semidefinite dissipation matrix, 
$H(X_t)$ is the Hamiltonian representing stored energy, 
$y(X_t)=g(X_t)^\top \nabla H(X_t)$ is the output,
and $\sigma(X_t)$ is the diffusion matrix that characterizes how noise enters the system.

This stochastic formulation preserves the key structural components of PHSs, 
namely the separation of conservative, dissipative, and external effects, while accommodating random fluctuations. 
This extension is essential for accurately modeling physical systems subject to uncertainty, and ensures that important properties, such as passivity and energy consistency, are retained in expectation or in a probabilistic sense.

\subsection{Energy Principles and Passivity}
The principle of energy conservation and balance is a central concept in PHS theory. 
The Hamiltonian function $H$ represents the system's total stored energy, while the dissipation matrix $R(x)$ models energy loss due to resistive effects.
In deterministic PHSs, the system's structure ensures that it cannot generate energy internally, thereby satisfying the principle of \textit{passivity}.

In the It\^{o} interpretation, applying It\^{o}'s formula to $H(X_t)$ produces an additional second-order term $\tfrac{1}{2} \Tr(\sigma \sigma^\top\nabla^2 H)$ in the energy balance, so noise may inject or remove energy in expectation even when $R \equiv 0$.
Consequently, weak passivity is not automatic from $J = -J^\top$ and $R \succeq 0$ alone, and it requires an explicit generator inequality that balances dissipation and the It\^{o} correction (see Corollary~\ref{cor:Weak_passivity} and Proposition~\ref{prop:Weak_passivity_global}).

Several formal notions of stochastic passivity have been proposed in the literature, see for example, \cite{cordoni2022stochastic,cordoni2023weak}. 
For the sake of completeness, we recall the standard definitions of \textit{strong} and \textit{weak} passivity.
\begin{defn}[Strong and Weak Passivity]\label{def:strong-weak-passivity}
Let $H\colon\mathcal{X} \to \mathbb{R}$ denote the Hamiltonian of a SPHS. 
The system interacts with its environment through port variables $(u,y)$, where the input is $u \colon [0,\infty) \to \mathbb{R}^m$ and the output is $y \colon [0,\infty) \to \mathbb{R}^m$. Then:
\begin{enumerate}
    \item The system is \textit{strongly passive} \cite{cordoni2022stochastic} if, for all sample paths and all $t \geq 0$,
    \begin{equation}\label{eq:inequality3}
        H(X_t) \leq H(X_0) + \int_0^t u(s)^\top y(s)\, ds.
    \end{equation}
    
    \item The system is \textit{weakly passive}  \cite{cordoni2022stochastic} if the inequality \eqref{eq:inequality3} holds in expectation:
    \begin{align}
        \mathbb{E}\bigl[H(X_t)\bigr] \leq \mathbb{E}\bigl[H(X_0)\bigr] + \mathbb{E}\biggl[ \int_0^t u(s)^\top y(s)\, ds \biggr].
    \end{align}
\end{enumerate}
\end{defn}

These definitions ensure that, on average, the energy within the system does not exceed the initial energy plus the cumulative power supplied through the input ports. 
In a stochastic setting, these passivity constraints are crucial for ensuring stability and limiting the system’s response under random perturbations.

\subsection{Learning a Hamiltonian from Data}
A key concept in structure-preserving machine learning is learning the underlying Hamiltonian function directly from the data. 
This captures the physics that govern the system. 
In the deterministic setting, the dynamics of many physical systems can be described by \textit{Hamilton's equations}:
\begin{equation}
    \dot{\mathbf{q}} = \frac{\partial \mathcal{H}_\theta}{\partial \mathbf{p}}, \qquad
    \dot{\mathbf{p}} = -\frac{\partial \mathcal{H}_\theta}{\partial \mathbf{q}},
\end{equation}
where $\mathcal{H}_\theta$ is a neural network parameterization of the Hamiltonian function, and $(\mathbf{q}, \mathbf{p})$ denote the generalized coordinates and momenta.

Given observed trajectory data $\{ (\mathbf{q}, \mathbf{p}, \dot{\mathbf{q}}, \dot{\mathbf{p}}) \}$, one can train the neural network $\mathcal{H}_\theta$ 
by minimizing the discrepancy between the predicted and true time derivatives using the following loss function \cite{greydanus2019hamiltonian}:
\begin{equation}\label{eq:objectivefunction}
    L_{\rm HNN} = \Bigl\| \dot{\mathbf{q}} - \frac{\partial \mathcal{H}_\theta}{\partial \mathbf{p}} \Bigr\|^2 
    + \Bigl\| \dot{\mathbf{p}} + \frac{\partial \mathcal{H}_\theta}{\partial \mathbf{q}} \Bigr\|^2.
\end{equation}
This objective function \eqref{eq:objectivefunction} motivates the learned model to satisfy Hamilton’s equations. 
It embeds the system's symplectic structure, which improves long-term predictive stability and energy conservation compared to standard regression techniques. 
This methodology forms the basis of the \textit{Hamiltonian Neural Network} (HNN) framework, introduced in \cite{greydanus2019hamiltonian}.

To extend this idea to stochastic systems, we consider the drift component of a SPHS \eqref{eq:SPHS}, given by 
\begin{equation}\label{eq:SPHSpart}
    \bigl[J(X_t) - R(X_t)\bigr] \nabla H_\theta(X_t),
\end{equation}
while a diffusion term $\sigma(X_t)\,dW_t$ accounts for stochastic perturbations (noise), and is usually modeled as Brownian motion $W_t$.
Given sampled trajectories of the system state $X_t$, we define a training objective that captures the drift structure:
\begin{equation}\label{eq:sph-nn-loss}
  \arg\min_\theta \Bigl\| \dot{X}_t - \bigl[J(X_t) - R(X_t)\bigr] \nabla H_\theta(X_t) \Bigr\|^2.
\end{equation}

In the stochastic setting, $X_t$ is an It\^{o} process that does not admit a classical derivative $\tfrac{dX_t}{dt}$. 
We can interpret $\dot{X}$ in two different ways:

\begin{enumerate}
\item Drift-based interpretation: $\dot{X}$ could be read as the empirical velocity computed from finite-difference estimates of trajectory increments,
\begin{equation}
  \dot{X}_t \approx \frac{x_{t+\Delta t} - x_t}{\Delta t},
\end{equation}
where $x_t$ are data points. 
This approach is used in \cite{greydanus2019hamiltonian} when analytic derivatives are unavailable. Instead, the authors used finite-difference approximations of $\dot{\mathbf{q}},\dot{\mathbf{p}}$, where $\text{q}$ is the position, $\text{p}$ is the momentum and $X_t=(\text{q},\text{p})$.
\item Conditional expectation interpretation: we match the expected increment per unit time,
\begin{equation}
  \dot{X}_t \approx \mathbb{E} \biggl[ \frac{X_{t+\Delta}-X_t}{\Delta} \Big| X_t\biggr].
\end{equation}
This approach is used in \cite{boninsegna2018sparse} where SDEs are learned by regressing Kramers-Moyal conditional moments.
\end{enumerate}

The formulation \eqref{eq:sph-nn-loss} is the cornerstone of our stochastic port-Hamiltonian neural network (SPH-NN) framework. It allows for the learning of energy-consistent dynamics under uncertainty by expressing the discrepancy in the drift term as an explicit loss function.

\subsection{The SDE Stability Lemma}
Several of our theoretical results depend on the continuity of SDE solutions with respect to small perturbations in the drift and diffusion terms. 
In stochastic calculus, it is well-known that the strong solution of an SDE depends continuously on its coefficients (and initial data) under standard conditions (Lipschitz continuity). 
If two SDEs have drift and diffusion coefficients that are close to each other (e.g., in sup norm) and share the same Brownian motion and initial state, then their solution processes will also be close to each other.

\begin{lem}\label{lem:PH_structured_approx}
Let $\mathcal{K} \subset \mathbb{R}^n$ be compact. 
Let $J\colon\mathcal{K} \to \mathbb{R}^{n \times n}$ and $\sigma\colon\mathcal{K} \to \mathbb{R}^{n \times d}$ be continuous and assume $J(x) = - J(x)^\top$ for all $x \in \mathcal{K}$. 
Let $H \in C^2(\mathbb{R}^n)$. 
Assume there exist $r \in \mathbb{N}$ and a continuous map $D\colon\mathcal{K} \to \mathbb{R}^{r \times n}$ and define $R(x) = D(x)^\top D(x)$ for $x \in \mathcal{K}$. 
Assume the activation $G$ is $l$-finite in the sense of \cite{hornik1990universal} for some $l \in \mathbb{N}$ with $l > 2$.
Throughout, $\|\cdot\|$ denotes the Euclidean norm for vectors and the induced operator norm for matrices, and $\|\cdot\|_F$ denotes the Frobenius norm.

Then, for any $\varepsilon > 0$ there exist feedforward neural networks with activation $G$
\begin{align*}
  \hat{A}\colon\mathbb{R}^n \to \mathbb{R}^{n \times n},\quad
  \hat{D}\colon\mathbb{R}^n \to \mathbb{R}^{r \times n},\\
   \hat{\sigma}\colon\mathbb{R}^n \to \mathbb{R}^{n \times d},\quad
   \hat{H}\colon\mathbb{R}^n \to \mathbb{R},
\end{align*}
such that, defining $\hat{J}(x) = \hat{A}(x) - \hat{A}(x)^\top$ and $\hat{R}(x) = \hat{D}(x)^\top \hat{D}(x)$, we have:
\begin{enumerate}
\item $\hat{J}(x) = -\hat{J}(x)^\top$ and $\hat{R}(x)\succeq 0$ for all $x \in \mathbb{R}^n$.
\item
\begin{equation*}
\begin{split}
& \sup_{x \in \mathcal{K}}\Big(
\|\hat{J}(x)-J(x)\|+\|\hat{R}(x)-R(x)\| \\
& \qquad + \|\hat{\sigma}(x)-\sigma(x)\|
\Big) \\
& \qquad + \|\hat{H}-H\|_{C^2(\mathcal{K})} \leq \varepsilon,
\end{split}
\end{equation*}
where
\begin{equation*}
\begin{split}
\|\hat{H}-H\|_{C^2(\mathcal{K})}
&= \sup_{x \in \mathcal{K}}|\hat{H}(x)-H(x)| \\
&\quad + \sup_{x \in \mathcal{K}}\|\nabla\hat{H}(x)-\nabla H(x)\| \\
&\quad + \sup_{x \in \mathcal{K}}\|\nabla^2\hat{H}(x)-\nabla^2 H(x)\|.
\end{split}
\end{equation*}
\end{enumerate}
\end{lem}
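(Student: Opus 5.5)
The plan is to approximate each coefficient separately with a network and then assemble $\hat J$ and $\hat R$ so that the structural properties hold by construction, independent of approximation quality. Property 1 is immediate from the constructions. For every $x\in\mathbb{R}^n$ we have $(\hat A-\hat A^\top)^\top=-(\hat A-\hat A^\top)$. Likewise $v^\top\hat D^\top\hat D v=\|\hat D v\|^2\ge 0$. So all the work is in the quantitative estimate, property 2. Fix $\varepsilon>0$ and allot $\varepsilon/4$ to each of the four terms.

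\textbf{The $J$ term.} Set $A:=\tfrac12 J$ on $\mathcal{K}$, which is continuous. Apply the universal approximation theorem for continuous functions on compacta, componentwise for each of the $n^2$ entries. This follows from the $C^0$ case of Hornik's result for $l$-finite activations \cite{hornik1990universal}, or simply from Hornik's density in $C^m(\mathcal{K})$ after a continuous (Tietze) extension of $A$ to $\mathbb{R}^n$. It yields $\hat A$ with $\sup_{\mathcal{K}}\|\hat A-A\|_F\le\eta$. Since $J=-J^\top$ on $\mathcal{K}$, we have $J=A-A^\top$. Hence $\|\hat J-J\|\le\|\hat A-A\|+\|(\hat A-A)^\top\|\le 2\eta$, using that the operator norm is at most the Frobenius norm.

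\textbf{The $\sigma$ term.} Treat $\sigma$ the same way, entrywise, with accuracy $\eta$.

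\textbf{The $R$ term.} Approximate $D$ entrywise by $\hat D$ with $\sup_{\mathcal{K}}\|\hat D-D\|\le\eta\le 1$. Write
\[
\hat D^\top\hat D-D^\top D=(\hat D-D)^\top\hat D+D^\top(\hat D-D).
\]
Continuity of $D$ on the compact set $\mathcal{K}$ gives $M:=\sup_{\mathcal{K}}\|D\|<\infty$. Therefore
\[
\sup_{\mathcal{K}}\|\hat R-R\|\le\eta(M+1)+M\eta=\eta(2M+1).
\]
This product step is the only place where the quadratic parameterization costs anything, and the cost is controlled by the compactness bound $M$.

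\textbf{The $H$ term.} This needs simultaneous approximation of $H$, $\nabla H$ and $\nabla^2 H$, which is the main obstacle. For it I will invoke Hornik's theorem \cite{hornik1990universal}. If $G$ is $l$-finite, meaning $G\in C^l$ with $0<\int|G^{(l)}|<\infty$, then single-hidden-layer networks are dense in $C^m(\mathcal{K})$ for every $m\le l$ and every compact $\mathcal{K}$, uniformly in all derivatives up to order $m$. Since $l>2$, the case $m=2$ applies to $H\in C^2(\mathbb{R}^n)$. This gives a network $\hat H$ with all partial derivatives of order $\le 2$ within $\eta'$ uniformly on $\mathcal{K}$. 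Converting entrywise bounds to norms costs only dimensional constants: $\|\nabla\hat H-\nabla H\|\le\sqrt n\,\eta'$ and $\|\nabla^2\hat H-\nabla^2H\|\le n\,\eta'$. One must check that Hornik's density is stated for $C^m(\mathbb{R}^n)$ functions restricted to compacta, or on a compact domain with the uniform $C^m$ norm. Either version suffices here, because $H$ is globally $C^2$.

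\textbf{Conclusion.} Choose $\eta$ and $\eta'$ small enough that
\[
2\eta+(2M+1)\eta+\eta+(1+\sqrt n+n)\eta'\le\varepsilon.
\]
This gives property 2, completing the proof.
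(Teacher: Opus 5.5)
Your proposal is correct and follows essentially the same argument as the paper. Both Tietze-extend and approximate $\tfrac12 J$, $D$, $\sigma$ entrywise via Hornik's $C^0$ density, assemble $\hat J=\hat A-\hat A^\top$ and $\hat R=\hat D^\top\hat D$ with the same $(2M+1)\eta$ product bound for $R$, and obtain $\hat H$ from Hornik's Corollary~3.4 with $m=2$, converting entrywise derivative errors to norms with the factors $\sqrt n$ and $n$; the paper only makes the constants explicit ($\varepsilon/4$ per term, entrywise tolerances scaled by dimension factors), matching your final choice of $\eta,\eta'$.
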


\begin{pf}
Since $\mathcal{K}$ is compact it is closed, and each scalar entry of $J$, $D$, $\sigma$ is bounded on $\mathcal{K}$. 
By the Tietze extension theorem applied to each scalar entry (componentwise), extend $J$, $D$, $\sigma$ from $\mathcal{K}$ to continuous maps on $\mathbb{R}^n$ (still denoted $J$, $D$, 
$\sigma$). 
Use the norm bounds
\begin{equation*}
   \|M\| \le \|M\|_F \leq \sqrt{pq} \max_{i,j}|M_{ij}|\quad \text{for } M \in \mathbb{R}^{p \times q}.
\end{equation*}
Let $M_D = \sup_{x \in \mathcal{K}}\|D(x)\| < \infty$ and set
\begin{equation*}
\eta_J = \frac{\varepsilon}{4},\; \eta_\sigma = \frac{\varepsilon}{4},\; \eta_H = \frac{\varepsilon}{4},\; \eta_D = \min\Big\{1,\frac{\varepsilon}{4(2M_D+1)}\Big\}.
\end{equation*}
Choose
\begin{equation*}
    \alpha_A = \frac{\eta_J}{2n}, \; \alpha_D=\frac{\eta_D}{\sqrt{rn}}, \; \alpha_\sigma=\frac{\eta_\sigma}{\sqrt{nd}}, \; \alpha_H=\frac{\eta_H}{1 + \sqrt{n} + n}.
\end{equation*}
By \cite[Corollary~3.4]{hornik1990universal} with $m=0$, choose networks $\hat{A}$, $\hat{D}$, $\hat{\sigma}$ such that on $\mathcal{K}$,
\begin{align*}
 \max_{i,j}\Big|\hat{A}_{ij}(x)-\frac{1}{2}J_{ij}(x)\Big|
&\le \alpha_A, \\
\max_{i,j}\big|\hat{D}_{ij}(x)-D_{ij}(x)\big|
&\le \alpha_D, \\
 \max_{i,j}\big|\hat{\sigma}_{ij}(x)-\sigma_{ij}(x)\big|
&\le \alpha_\sigma.
\end{align*}
Then for $x \in \mathcal{K}$, we have
\begin{equation*}
  \Big\|\hat{A}(x) - \frac{1}{2} J(x) \Big\| 
  \le \Big\|\hat{A}(x) - \frac{1}{2}J(x)\Big\|_F \le n \alpha_A = \frac{\eta_J}{2},
\end{equation*}
\begin{equation*}
   \|\hat{D}(x) - D(x)\| \le \|\hat{D}(x) - D(x)\|_F \le \sqrt{rn} \alpha_D = \eta_D,
\end{equation*}
\begin{equation*}
   \|\hat{\sigma}(x) - \sigma(x)\| \le \|\hat{\sigma}(x) - \sigma(x)\|_F \le \sqrt{nd} \alpha_\sigma = \eta_\sigma.
\end{equation*}
By \cite[Corollary~3.4]{hornik1990universal} with $m = 2$, choose $\hat{H}$ such that for all multi-indices $\beta$ with $|\beta| \leq 2$,
\begin{equation*}
   \sup_{x \in \mathcal{K}}|\partial^\beta\hat{H}(x) - \partial^\beta H(x)| \leq \alpha_H.
\end{equation*}
Hence
\begin{align*}
\sup_{x \in \mathcal{K}}|\hat{H}(x)-H(x)|
&\leq \alpha_H, \\
 \sup_{x \in \mathcal{K}}\|\nabla\hat{H}(x)-\nabla H(x)\|
&\leq \sqrt{n}\alpha_H, \\
 \sup_{x \in \mathcal{K}}\|\nabla^2\hat{H}(x)-\nabla^2 H(x)\|
&\leq n\alpha_H,
\end{align*}
so $\|\hat{H} - H\|_{C^2(\mathcal{K})} \leq (1 + \sqrt{n} + n)\alpha_H = \eta_H$.

Next, define $\hat{J} = \hat{A} - \hat{A}^\top$ and $\hat{R} = \hat{D}^\top\hat{D}$. 
Then, $\hat{J}$ is skew-symmetric and $\hat{R} \succeq 0$ on $\mathbb{R}^n$.
For $x \in \mathcal{K}$, since $J(x) =-J(x)^\top$,
\begin{equation*}
\begin{split}
\|\hat{J}(x)-J(x)\|
&= \Big\|\Big(\hat{A}(x)-\frac{1}{2}J(x)\Big)
- \Big(\hat{A}(x)-\frac{1}{2}J(x)\Big)^\top\Big\| \\
&\le 2\Big\|\hat{A}(x)-\frac{1}{2}J(x)\Big\| \le\eta_J.
\end{split}
\end{equation*}

Also, for $x \in \mathcal{K}$,
\begin{equation*}
\begin{split}
\|\hat{R}(x)-R(x)\|
&= \|\hat{D}(x)^\top\hat{D}(x)-D(x)^\top D(x)\| \\
&\leq (\|\hat{D}(x)\|+\|D(x)\|)\,
\|\hat{D}(x)-D(x)\|.
\end{split}
\end{equation*}

Since $\|\hat{D}(x)\| \leq \|D(x)\| + \eta_D \leq M_D + 1$, we get
\begin{equation*}
   \|\hat{R}(x) - R(x)\| \leq (2M_D + 1)\eta_D \leq \frac{\varepsilon}{4}.
\end{equation*}
Therefore,
\begin{equation*}
\begin{split}
& \sup_{x \in \mathcal{K}}\Big(
\|\hat{J}(x)-J(x)\|+\|\hat{R}(x)-R(x)\| \\
& \qquad + \|\hat{\sigma}(x)-\sigma(x)\|
\Big) + \|\hat{H}-H\|_{C^2(\mathcal{K})} \\
& \le \eta_J+\frac{\varepsilon}{4}+\eta_\sigma+\eta_H=\varepsilon.
\end{split}
\end{equation*}

\end{pf}

\begin{prop}\label{prop:Stopped_SDE_stability}
Fix $T > 0$ and a compact set $\mathcal{K} \subset \mathbb{R}^n$. 
Let $b,\hat{b}\colon[0,T] \times \mathbb{R}^n \to\mathbb{R}^n$ be measurable and continuous in $x$, and let $\sigma,\hat{\sigma}\colon\mathbb{R}^n \to \mathbb{R}^{n \times d}$ be continuous. 
Let $X$, $\widehat{X}$ be strong solutions on $[0,T]$ of
\begin{equation*}
   dX_t = b(t,X_t) \, dt + \sigma(X_t) \,dW_t, \quad X_0=x_0 \in \mathcal{K},
\end{equation*}
\begin{equation*}
   d \widehat{X}_t = \hat{b}(t,\widehat{X}_t) \, dt + \hat{\sigma}(\widehat{X}_t) \, dW_t, \quad \widehat{X}_0=x_0,
\end{equation*}
driven by the same Brownian motion $W$. Assume:
\begin{enumerate}
\item There exists $L < \infty$ such that for all $x , y \in \mathcal{K}$ and all $t \in [0,T]$,
\begin{equation*}
     |b(t,x) - b(t,y)| + \|\sigma(x) - \sigma(y)\|_F \leq L|x-y|.
\end{equation*}
\item
\begin{align*}
\alpha &= \sup_{t \in [0,T]}\sup_{x\in\mathcal{K}}|b(t,x)-\hat{b}(t,x)| < \infty, \\
\beta &= \sup_{x \in \mathcal{K}}\|\sigma(x)-\hat{\sigma}(x)\|_F < \infty.
\end{align*}

\end{enumerate}
Define the joint exit time
\begin{equation*}
   \tau_{\mathcal{K}} = \inf\{t \ge0:X_t \notin\mathcal{K}\text{ or }\widehat{X}_t \notin\mathcal{K}\}.
\end{equation*}
With $\Delta_t = X_{t\wedge\tau_{\mathcal{K}}} - \widehat{X}_{t \wedge \tau_{\mathcal{K}}}$ and $F(t) = \mathbb{E} \big[\sup_{0 \leq r \leq t}|\Delta_r|^2\big]$, for all $t \in [0,T]$,
\begin{equation*}
   F(t) \le (4T + 16)L^2 \int_0^tF(s) \, ds + (4T^2+16T) (\alpha+\beta)^2,
\end{equation*}
hence
\begin{equation*}
   F(T) \le (4T^2 + 16T) \,\exp\bigl((4T + 16)L^2T\bigr)(\alpha +\beta)^2.
\end{equation*}
In particular, for every $\varepsilon > 0$,
\begin{equation*}
\begin{split}
&\mathbb{P}\Big(\sup_{0 \leq t \leq T \wedge \tau_{\mathcal{K}}}
|X_t-\widehat{X}_t| > \varepsilon
\Big)\\ \le
& \frac{(4T^2+16T)\,\exp\bigl((4T+16)L^2T\bigr)}{\varepsilon^2} \times (\alpha+\beta)^2.
\end{split}
\end{equation*}
\end{prop}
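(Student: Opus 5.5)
We will follow the classical Gronwall-type argument for strong stability of SDEs, carried out for the processes stopped at the joint exit time $\tau_{\mathcal{K}}$. Before time $\tau_{\mathcal{K}}$ both $X_s$ and $\widehat{X}_s$ lie in $\mathcal{K}$, so only the local hypotheses on $\mathcal{K}$ are needed. One caveat concerns paths that are right-continuous with exits at the boundary: we use the closedness of $\mathcal{K}$ and continuity of paths, so that $X_s,\widehat{X}_s\in\mathcal{K}$ for $s\le\tau_{\mathcal{K}}$ (when $\tau_{\mathcal{K}}<\infty$, continuity gives $X_{\tau},\widehat X_\tau\in\mathcal K$ since $\mathcal K$ is closed).

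\textbf{Step 1: decomposition.} Since the initial data coincide, we write
\begin{equation*}
\Delta_t=\int_0^{t\wedge\tau_{\mathcal{K}}}\bigl(b(s,X_s)-\hat b(s,\widehat X_s)\bigr)ds+\int_0^{t\wedge\tau_{\mathcal{K}}}\bigl(\sigma(X_s)-\hat\sigma(\widehat X_s)\bigr)dW_s .
\end{equation*}
In each integrand we insert the intermediate term, $b(s,X_s)-b(s,\widehat X_s)+b(s,\widehat X_s)-\hat b(s,\widehat X_s)$, and similarly for $\sigma$. For $s\le\tau_{\mathcal{K}}$ the first difference is bounded by $L|X_s-\widehat X_s| = L|\Delta_s|$ and the second by $\alpha$ (respectively by $L|\Delta_s|$ and $\beta$ in Frobenius norm). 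We then apply $(a+b)^2\le 2a^2+2b^2$ to the drift and diffusion parts together.

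\textbf{Step 2: moment estimates.} For the drift part we use Cauchy--Schwarz in time,
\begin{equation*}
\sup_{r\le t}\Bigl|\int_0^{r\wedge\tau}\cdots\Bigr|^2\le t\int_0^t \mathbf 1_{s\le\tau}\,|\cdots|^2ds\le 2T\int_0^t (L^2|\Delta_s|^2+\alpha^2)ds .
\end{equation*}
For the stochastic integral, the stopped integrand is bounded because $\sigma,\hat\sigma$ are continuous on the compact set $\mathcal K$. The integral is therefore a true square-integrable martingale, and Doob's $L^2$ maximal inequality with the It\^o isometry gives a factor $4$, yielding $4\cdot 2\int_0^t(L^2|\Delta_s|^2+\beta^2)ds$. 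Combining with the outer factor $2$ and bounding $|\Delta_s|^2\le\sup_{r\le s}|\Delta_r|^2$, we obtain
\begin{equation*}
F(t)\le (4T+16)L^2\int_0^t F(s)ds+4T^2\alpha^2+16T\beta^2 .
\end{equation*}
Since $4T^2\alpha^2+16T\beta^2\le(4T^2+16T)(\alpha+\beta)^2$, this is the claimed integral inequality; the constants are generous, so slight slack in the bookkeeping is harmless.

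\textbf{Step 3: Gronwall and Chebyshev.} To apply Gronwall we need $F$ finite and measurable. Finiteness holds because $\Delta$ is bounded by $\operatorname{diam}(\mathcal{K})$ up to $\tau_{\mathcal K}$ and is constant afterwards, and $F$ is nondecreasing. Gronwall's inequality then gives $F(T)\le(4T^2+16T)e^{(4T+16)L^2T}(\alpha+\beta)^2$. Finally, $\sup_{t\le T\wedge\tau}|X_t-\widehat X_t|=\sup_{t\le T}|\Delta_t|$, so Markov's inequality applied to its square gives the probability bound.

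\textbf{Main obstacle.} The delicate point is the justification of the martingale step. We must confirm that the stopped stochastic integral is a genuine $L^2$ martingale so that Doob and the It\^o isometry apply; this follows from boundedness on $\mathcal K$ of the stopped integrand. We must also confirm that the Lipschitz bound is legitimately used only at times $s\le\tau_{\mathcal K}$, where both processes lie in $\mathcal K$, which requires careful treatment of the indicator $\mathbf 1_{s\le\tau_{\mathcal K}}$ and of continuity at the exit time.
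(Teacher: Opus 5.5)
Your proposal is correct and follows essentially the same route as the paper. The steps match: the split $\Delta_t = A_t + M_t$ with the intermediate-term bounds $L|\Delta_s|+\alpha$ and $L|\Delta_s|+\beta$, Cauchy--Schwarz in time for the drift, Doob's $L^2$ inequality with the It\^o isometry for the stopped martingale, the bound $4T^2\alpha^2+16T\beta^2$ absorbed into $(4T^2+16T)(\alpha+\beta)^2$, and then Gronwall and Markov. Your observation that $F$ is bounded by the squared diameter of $\mathcal{K}$, which is what lets Gronwall apply, is a small rigor point the paper leaves implicit.
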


\begin{pf}
Since $X,\widehat{X}$ are continuous adapted processes and $\mathcal{K}\times\mathcal{K}$ is closed, $\tau_{\mathcal{K}}$ is a stopping time and $X_{t\wedge\tau_{\mathcal{K}}},\widehat{X}_{t\wedge\tau_{\mathcal{K}}}\in\mathcal{K}$ for all $t \in [0,T]$. 
In particular, the stopped integrands below are predictable and bounded, hence square-integrable, so the It\^o integrals are well-defined square-integrable martingales.

For $t \in [0,T]$,
\begin{equation*}
\begin{split}
\Delta_t
&= \int_0^{t \wedge \tau_{\mathcal{K}}}\bigl(b(s,X_s) - \hat{b}(s,\widehat{X}_s)\bigr)\,ds \\
&\quad + \int_0^{t \wedge \tau_{\mathcal{K}}}\bigl(\sigma(X_s) - \hat{\sigma}(\widehat{X}_s)\bigr)\,dW_s .
\end{split}
\end{equation*}

Set
\begin{align*}
A_t
&= \int_0^{t \wedge \tau_{\mathcal{K}}}\bigl(b(s,X_s) - \hat{b}(s,\widehat{X}_s)\bigr)\,ds, \\
M_t
&= \int_0^{t \wedge \tau_{\mathcal{K}}}\bigl(\sigma(X_s) - \hat{\sigma}(\widehat{X}_s)\bigr)\,dW_s ,
\end{align*}
so that $\Delta_t = A_t + M_t$ and
\begin{equation*}
\sup_{0 \leq r \leq t}|\Delta_r|^2
\leq 2\sup_{0 \leq r \leq t}|A_r|^2 + 2\sup_{0 \le r \le t}|M_r|^2.
\end{equation*}

For $s < \tau_{\mathcal{K}}$ we have $X_s,\widehat{X}_s \in \mathcal{K}$, so
\begin{equation*}
\begin{split}
|b(s,X_s)-\hat{b}(s,\widehat{X}_s)|
&\leq |b(s,X_s)-b(s,\widehat{X}_s)| \\
&\quad + |b(s,\widehat{X}_s)-\hat{b}(s,\widehat{X}_s)| \\
&\leq L|\Delta_s|+\alpha,
\end{split}
\end{equation*}
and
\begin{equation*}
\begin{split}
\|\sigma(X_s)-\hat{\sigma}(\widehat{X}_s)\|_F
&\leq \|\sigma(X_s)-\sigma(\widehat{X}_s)\|_F \\
&\quad + \|\sigma(\widehat{X}_s)-\hat{\sigma}(\widehat{X}_s)\|_F \\
&\leq L|\Delta_s|+\beta.
\end{split}
\end{equation*}

For the drift term, for every $r \in [0,t]$,
\begin{equation*}
\begin{split}
|A_r|
&\le \int_0^{r \wedge \tau_{\mathcal{K}}}
\bigl|b(s,X_s)-\hat{b}(s,\widehat{X}_s)\bigr|\,ds \\
&\le \int_0^{t \wedge \tau_{\mathcal{K}}}
\bigl(L|\Delta_s|+\alpha\bigr)\,ds ,
\end{split}
\end{equation*}
so by Cauchy--Schwarz,
\begin{equation*}
\begin{split}
\sup_{0 \le r \le t}|A_r|^2
&\le \Big(\int_0^{t \wedge \tau_{\mathcal{K}}}
\bigl(L|\Delta_s|+\alpha\bigr)\,ds\Big)^2 \\
&\le t \int_0^{t \wedge \tau_{\mathcal{K}}}
\bigl(L|\Delta_s|+\alpha\bigr)^2\,ds .
\end{split}
\end{equation*}

Taking expectations and using $(a+b)^2 \leq 2a^2+2b^2$ and $|\Delta_s|^2 \leq \sup_{0 \leq u \leq s}|\Delta_u|^2$ gives
\begin{equation*}
\begin{split}
\mathbb{E}\Big[\sup_{0 \leq r \leq t}|A_r|^2\Big]
&\leq t\int_0^t \mathbb{E}\big[(L|\Delta_s|+\alpha)^2\big]\,ds \\
&\leq t\int_0^t \big(2L^2F(s) + 2\alpha^2\big)\,ds \\
&\leq 2TL^2\int_0^tF(s)\,ds + 2T^2\alpha^2.
\end{split}
\end{equation*}

For the martingale term, Doob's $L^2$ maximal inequality and It\^o isometry yield
\begin{equation*}
\begin{split}
\mathbb{E}\Big[\sup_{0 \leq r \leq t}|M_r|^2\Big]
&\leq 4\,\mathbb{E}|M_t|^2 \\
&= 4\,\mathbb{E}\int_0^{t \wedge \tau_{\mathcal{K}}}\|\sigma(X_s)-\hat{\sigma}(\widehat{X}_s)\|_F^2\,ds \\
&\leq 4\int_0^t \mathbb{E}\big[(L|\Delta_s|+\beta)^2\big]\,ds \\
&\leq 4\int_0^t \big(2L^2F(s) + 2\beta^2\big)\,ds \\
&\leq 8L^2\int_0^tF(s)\,ds + 8T\beta^2.
\end{split}
\end{equation*}

Combining the two estimates, we get
\begin{equation*}
\begin{split}
F(t)
&=\mathbb{E} \Big[\sup_{0 \leq r \leq t}|\Delta_r|^2\Big] \\
&\leq 2\mathbb{E}\Big[\sup_{0 \leq r \leq t}|A_r|^2\Big]
+2\mathbb{E}\Big[\sup_{0 \leq r \leq t}|M_r|^2\Big] \\
&\leq (4T + 16)L^2\int_0^tF(s) \, ds + 4T^2\alpha^2 + 16T\beta^2 \\
&\leq (4T + 16)L^2\int_0^tF(s) \, ds + (4T^2+16T)(\alpha+\beta)^2.
\end{split}
\end{equation*}

Gronwall's inequality yields the bound on $F(T)$. Finally, Markov's inequality gives
\begin{multline*}
\mathbb{P}\Big(
\sup_{0 \leq t \leq T \wedge \tau_{\mathcal{K}}}
|X_t-\widehat{X}_t| > \varepsilon
\Big)\\
= \mathbb{P}\Big(\sup_{0 \leq r \leq T}|\Delta_r|^2 > \varepsilon^2\Big)
\leq \frac{F(T)}{\varepsilon^2},
\end{multline*}
which yields the stated probability estimate.
\end{pf}

\section{Stochastic Port-Hamiltonian Neural Networks}\label{sec:sphnn}
We now formalize our proposed model: a class of neural networks–based stochastic port-Hamiltonian systems (SPHS), referred to as a stochastic port-Hamiltonian neural networks (SPH-NNs).
The key idea is to use a neural network to parameterize the system's Hamiltonian. 
This allows us to learn energy-consistent dynamics directly from data while enforcing key port-Hamiltonian structural constraints. Weak-passivity bounds in expectation follow under explicit generator inequalities (Corollary~\ref{cor:Weak_passivity}).

Given a time series $\{X_t\}_{t=0}^T$, sampled at intervals with a step size $\Delta$, we fit the drift
\begin{equation}
    \dot{X}_t \approx \bigl[ J(X_t) - R(X_t) \bigr] \,\nabla H_\theta(X_t)
\end{equation}
by minimizing the squared error between the model prediction and a data-driven target for $\dot{X}$. 
We consider two targets:

\begin{enumerate}
\item \textbf{Increment-Based (IB):}
\begin{equation}\label{eq:fd-target}
    \widehat{\dot{X}}^{\text{IB}}_t = \frac{X_{t + \Delta}-X_t}{\Delta},
\end{equation}

\item \textbf{Conditional Expectation (CE):}
\begin{equation}\label{eq:ce-target}
   \widehat{\dot{X}}^{\text{CE}}(x) \approx \mathbb{E} \; \biggl[ \frac{X_{t + \Delta}-X_t}{\Delta} \, \Big| \, X_t = x\biggr],
\end{equation}
estimated by local regression of increments against state.
\end{enumerate}

The corresponding SPH-NN objectives are
\begin{equation}\label{eq:loss-fd}
   \mathcal{L}_{\text{IB}}(\theta) 
   = \frac{1}{T}\sum_{t = 0}^{T-1}\Bigl\| \widehat{ \dot{X}}^{\text{IB}}_t 
      - \bigl[J(X_t)-R(X_t)\bigr] \nabla H_\theta(X_t)\Bigr\|^2,
\end{equation}
\begin{multline}\label{eq:loss-km}
\mathcal{L}_{\text{CE}}(\theta) =\\
\frac{1}{T} \sum_{t=0}^{T-1} \Bigl\|\widehat{\dot{X}}^{\text{CE}}(X_t) - \bigl[J(X_t) - R(X_t) \bigr] \nabla H_\theta(X_t)\Bigr\|^2.
\end{multline}

We also parameterize the diffusion $B_\theta(x)$ and train it using the negative log-likelihood under an Euler-Maruyama discretization. 
With an Euler-Maruyama discretization, we obtain the discrete-time update
\begin{equation*}
X_{k+1} \approx X_k + f(X_k)\,\Delta t + B(X_k)\sqrt{\Delta t}\,\xi_k,
\quad \xi_k \sim \mathcal{N}(0,I),
\end{equation*}
such that
\begin{equation*}
\Delta X_k := X_{k+1} - X_k
\sim \mathcal{N}\big(f(X_k)\Delta t,B (X_k)B (X_k)^{\top} \Delta t\big),
\end{equation*}
with mean $\mu_{\theta}(X_k) = f(X_k)\Delta t$ and covariance
$\Sigma_{\theta}(X_k) = B_{\theta}(X_k)B_{\theta}(X_k)^{\top} \Delta t$.

Given data $(X_k, X_{k+1})$, the network is trained by \textit{negative log-likelihood} (NLL):
\begin{equation}\label{eq:NLL}
 \mathcal{L}_{\text{NLL}}(\theta)
= - \sum_k \log
\mathcal{N}\bigl(\Delta X_k \mid \mu_{\theta}(X_k),\Sigma_{\theta}(X_k)\bigr).
\end{equation}

The NLL objective \eqref{eq:NLL} is used to fit the full conditional law of the increments $\Delta X_k$ by learning the mean $\mu_{\theta}(x) = f_{\theta}(x)\Delta t$ and the covariance $\Sigma_{\theta}(x) = B_{\theta}(x)B_{\theta}(x)^{\top}\Delta t$ that best explain the data. 
Since the Gaussian model arises from the Euler-Maruyama scheme, it should be interpreted as an approximation of the continuous-time transition density whose fidelity improves as $\Delta t \to 0$ (under standard regularity assumptions).

The SPH-NN architecture remains the same in all experiments, except for the varying training objective. 
Table~\ref{tab:sph-nn-losses} summarizes the three resulting variants and their associated losses.

\begin{table}[htb]
\centering
\begin{tabular}{ll}
\hline
Variant & Objective \\
\hline
SPH-NN-IB  & \!\!\!\!Increment-based loss $\mathcal{L}_{\text{IB}}$ \\
SPH-NN-CE  & \!\!\!\!Conditional-expectation loss $\mathcal{L}_{\text{CE}}$ \\
SPH-NN-NLL & \!\!\!\!Euler-Maruyama 
neg.\ log-likelihood $\mathcal{L}_{\text{NLL}}$ \\
\hline
\end{tabular}
\caption{SPH-NN variants trained with three objectives.}
\label{tab:sph-nn-losses}
\end{table}

For all variants, the gradient of the loss function $\nabla H_\theta$, or the derivative of the loss function with respect to the parameters, is computed by automatic differentiation. The parameters are then optimized with Adam.
The system state, $x = [q, p] \in \mathbb{R}^n$, serves as the input to the network. 
Here, $q$ and $p$ represent the generalized positions and momenta, respectively. 
The network architecture consists of two hidden layers, each with 64 neurons and Tanh activation functions. These are followed by a linear output layer that produces a scalar-valued Hamiltonian. 
The network is trained using the Adam optimizer \cite{kingma2014adam}, with gradients computed via automatic differentiation. 
In the benchmarks below, the structure-preserving formulation improves long-horizon behavior relative to an unconstrained MLP baseline.

In the stochastic setting with diffusion, individual trajectories neither conserve energy nor satisfy strict passivity,
because random perturbations can inject or remove energy. 
The evolution of energy is determined by the balance between dissipation and noise-induced effects.
The SPH-NN satisfies key properties similar to those of port-Hamiltonian systems. 
When dissipation is absent ($R(x)=0$), the It\^{o} correction can inject energy in expectation. In Section~\ref{sec:experiments} we report the empirical mean energy over the rollout horizon $[0,T]$, where it remains bounded.


\section{Experimental Results}\label{sec:experiments}
We evaluate the proposed structure-preserving learning approach using three stochastic port-Hamiltonian benchmarks: 
a linear mass-spring oscillator, the Duffing oscillator, and a stochastic Van der Pol (VDP) oscillator.
In Table~\ref{tab:oscillator_results}, we compare a multilayer perceptron baseline (Baseline) against the three SPH-NN variants: SPH-NN-IB/CE/NLL. 
Unless otherwise stated, we fix the analytical interconnection matrix $J$ (and $g$, when present).
For mass-spring and Duffing we learn $H_\theta$ and compare it against the analytic Hamiltonian.
For Van der Pol, which is not canonically Hamiltonian in 2D and for which the choice of storage is non-unique, we fix $H(X_t)=\frac{1}{2}X_t^\top I X_t$ (cf.~\cite{cordoni2022stochastic}) and learn only the nonconservative term, reporting energy relative to this storage. 
When learning dissipation, we parameterize $R_\theta=D_\theta^\top D_\theta$.
Trajectories are generated by simulating the ground-truth SDEs using the Euler-Maruyama method.

\begin{table*}[t]
\centering
\small
\caption{One-step drift error (True-MSE) and mean absolute rollout errors on the deterministic mean dynamics over $t \in [0,20]$ from $(q_0,p_0)=(1,0)$. 
The energy error uses the ground-truth Hamiltonian $H$.}
\begin{tabular}{llcccc}
\hline
System & Model & True-MSE & Mean $|\Delta q|$ & Mean $|\Delta p|$ & Mean $|\Delta H|$\\
\hline
Mass-spring & Baseline & $3.63\times10^{-3}$ & 0.259 & 0.269 & 0.557\\
Mass-spring & SPH-NN-IB & $2.68\times10^{-3}$ & 0.065 & 0.049 & 0.065\\
Mass-spring & SPH-NN-CE & $1.14\times10^{-3}$ & 0.059 & 0.057 & 0.011\\
Mass-spring & SPH-NN-NLL & $2.65\times10^{-3}$ & 0.065 & 0.052 & 0.055\\
\hline
Duffing & Baseline & $5.67\times10^{-3}$ & 0.331 & 0.393 & 0.083\\
Duffing & SPH-NN-IB & $2.60\times10^{-3}$ & 0.018 & 0.017 & 0.005\\
Duffing & SPH-NN-CE & $3.23\times10^{-3}$ & 0.030 & 0.022 & 0.011\\
Duffing & SPH-NN-NLL & $2.76\times10^{-3}$ & 0.020 & 0.039 & 0.007\\
\hline
Van der Pol & Baseline & $2.53\times10^{-3}$ & 0.247 & 0.330 & 0.349\\
Van der Pol & SPH-NN-IB & $2.87\times10^{-5}$ & 0.037 & 0.051 & 0.053\\
Van der Pol & SPH-NN-CE & $2.20\times10^{-4}$ & 0.011 & 0.017 & 0.018\\
Van der Pol & SPH-NN-NLL & $1.50\times10^{-4}$ & 0.015 & 0.020 & 0.023\\
\hline
\end{tabular}
\label{tab:oscillator_results}
\end{table*}

\subsection{Mass-spring oscillator}
Consider the mass–spring system
\begin{equation*}
    m\ddot{x}=-kx+F,
\end{equation*}  
where $x$ is the position of the system, $m$ its mass, $F$ the applied force and $k$ the stiffness of the spring. 
Defining $p=m\dot{x}$ as the momentum and $q=x$. 
$X=(p,q)$ defines a PHS with respect to the energy
\begin{equation*}
    H(p,q) = \frac{1}{2}kq^2+\frac{1}{2}\frac{p^2}{m},
\end{equation*}
of the form
\begin{equation*}
\begin{cases}
\dot{X}&=J\partial_x H(X)+gF,\\
y&=g^\top\partial_x H(X),
\end{cases}
\end{equation*}
with
\begin{equation*}
   J=\begin{pmatrix}0&1\\-1&0\end{pmatrix},
   \quad g=\begin{pmatrix}0\\1\end{pmatrix},
   \quad \partial_x H(X)=\begin{pmatrix}kq\\\frac{p}{m}\end{pmatrix}.
\end{equation*}
We can generalize the system to a SPHS in It\^o form
\begin{equation*}
\begin{pmatrix}dq_t\\dp_t\end{pmatrix}=
\begin{pmatrix}\frac{p_t}{m}-\frac{kq_t}{2m}\\-kq_t-\frac{k}{2m}p_t+F\end{pmatrix}\,dt+
\begin{pmatrix}\frac{p_t}{m}\\-kq_t\end{pmatrix}\,dW_t,
\end{equation*}
where $W_t$ is a standard Brownian motion, and $q_t$ and $p_t$ denote respectively the position and the momentum. In what follows we consider $F=0$ which yields an undamped mass-spring oscillator.

Figure~\ref{fig:phase_space_mass_spring} shows that the baseline drift accumulates phase error and yields a large energy mismatch on long rollouts, while all SPH variants remain close to the invariant orbit.
\begin{figure}[htbp]
    \centering
    \includegraphics[width=0.5\textwidth]{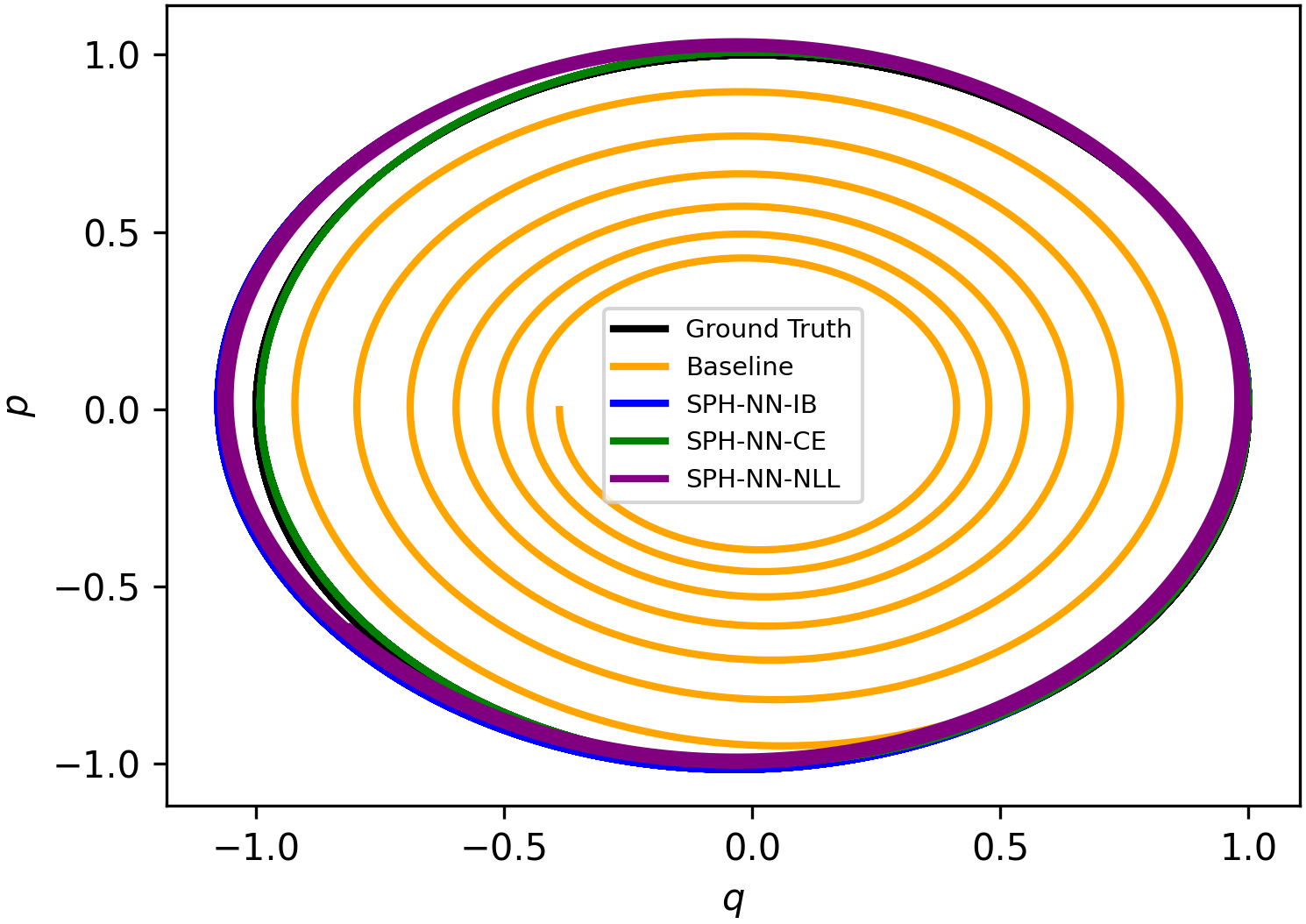}
    \caption{Phase space - Mass spring oscillator.}
    \label{fig:phase_space_mass_spring}
\end{figure}

\begin{figure}[htbp]
    \centering
    \includegraphics[width=0.5\textwidth]{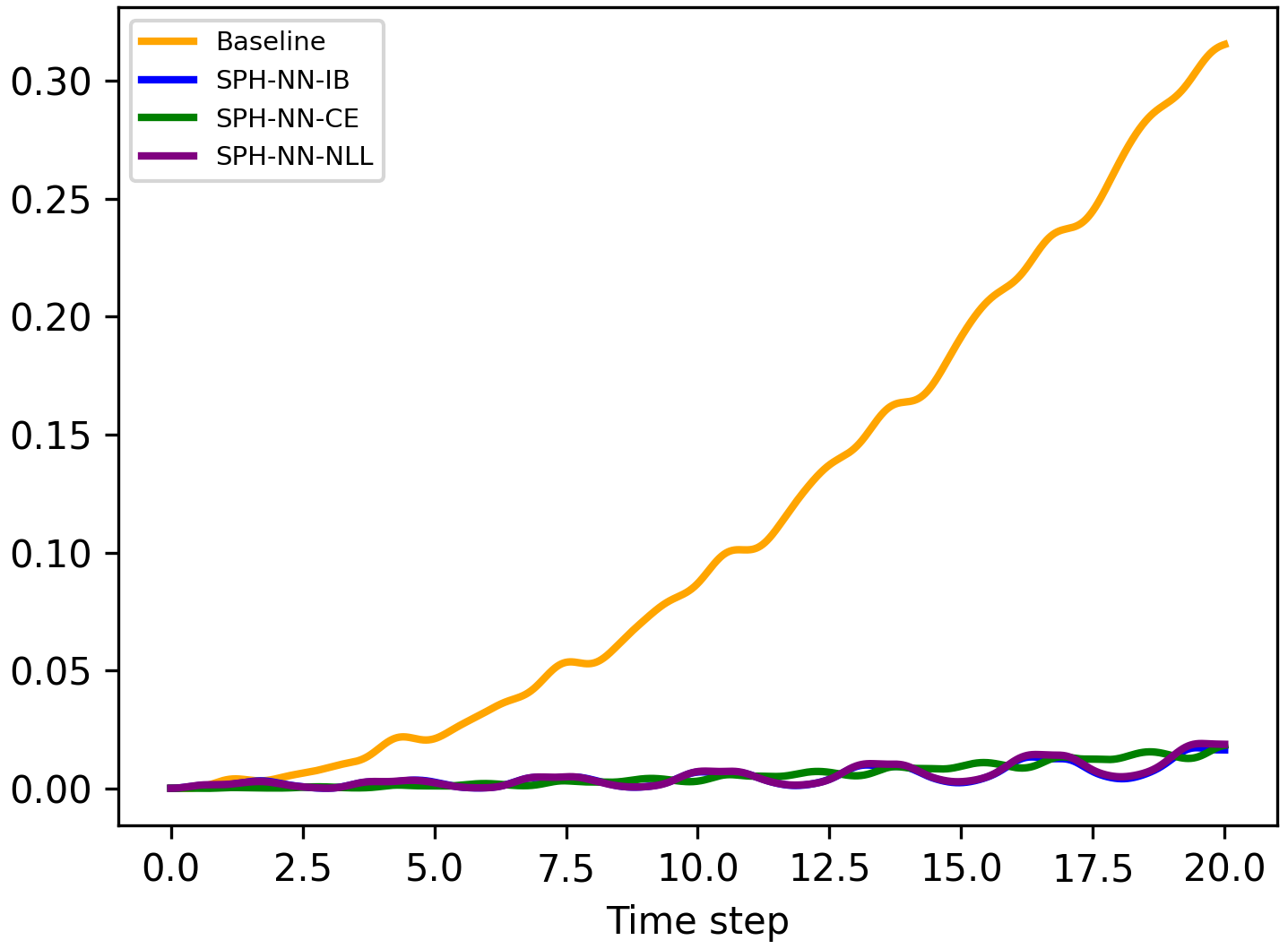}
    \caption{Mean squared error of the rollout - Mass-spring oscillator: the baseline error grows quickly, while the SPH methods keep the error low and stable.}
    \label{fig:mse_rollout_mass_spring}
\end{figure}

\begin{figure}[htbp]
    \centering
    \includegraphics[width=0.5\textwidth]{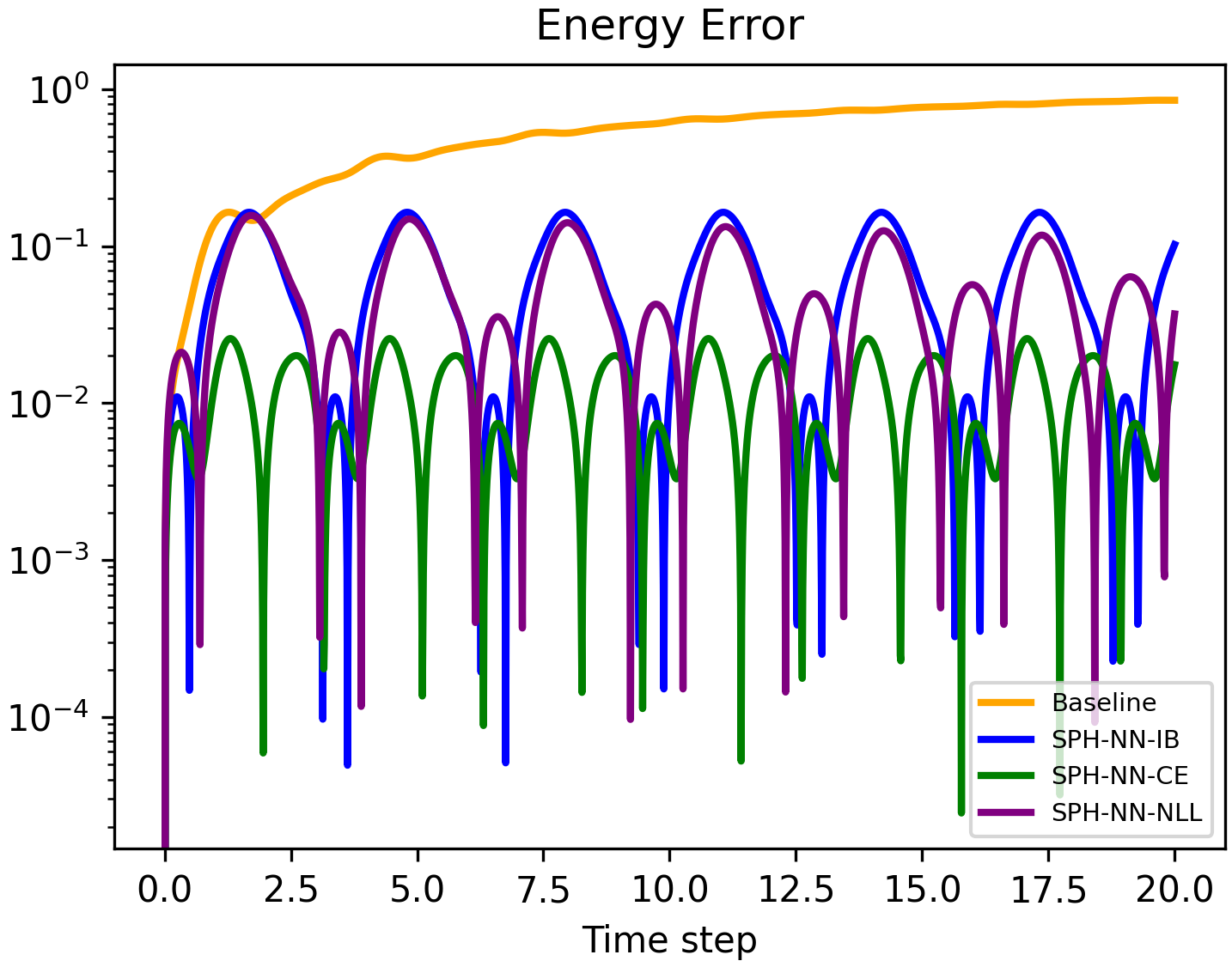}
    \caption{Energy error - Mass-spring oscillator: SPH-NN models significantly reduce energy error compared to the baseline, with the CE loss achieving the most accurate and stable energy behavior over time.}
\end{figure}

\begin{figure}[htbp]
    \centering
    \includegraphics[width=0.5\textwidth]{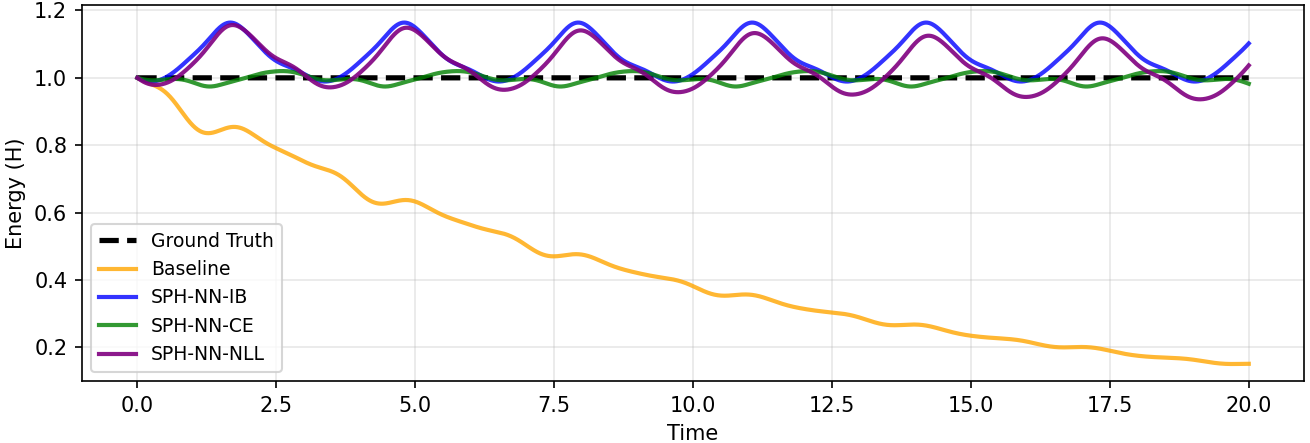}
    \caption{Energy evolution - Mass-spring oscillator: the baseline loses energy steadily, while SPH models keep energy close to the true constant value.}
\end{figure}

\begin{figure}[htbp]
    \centering
    \includegraphics[width=0.5\textwidth]{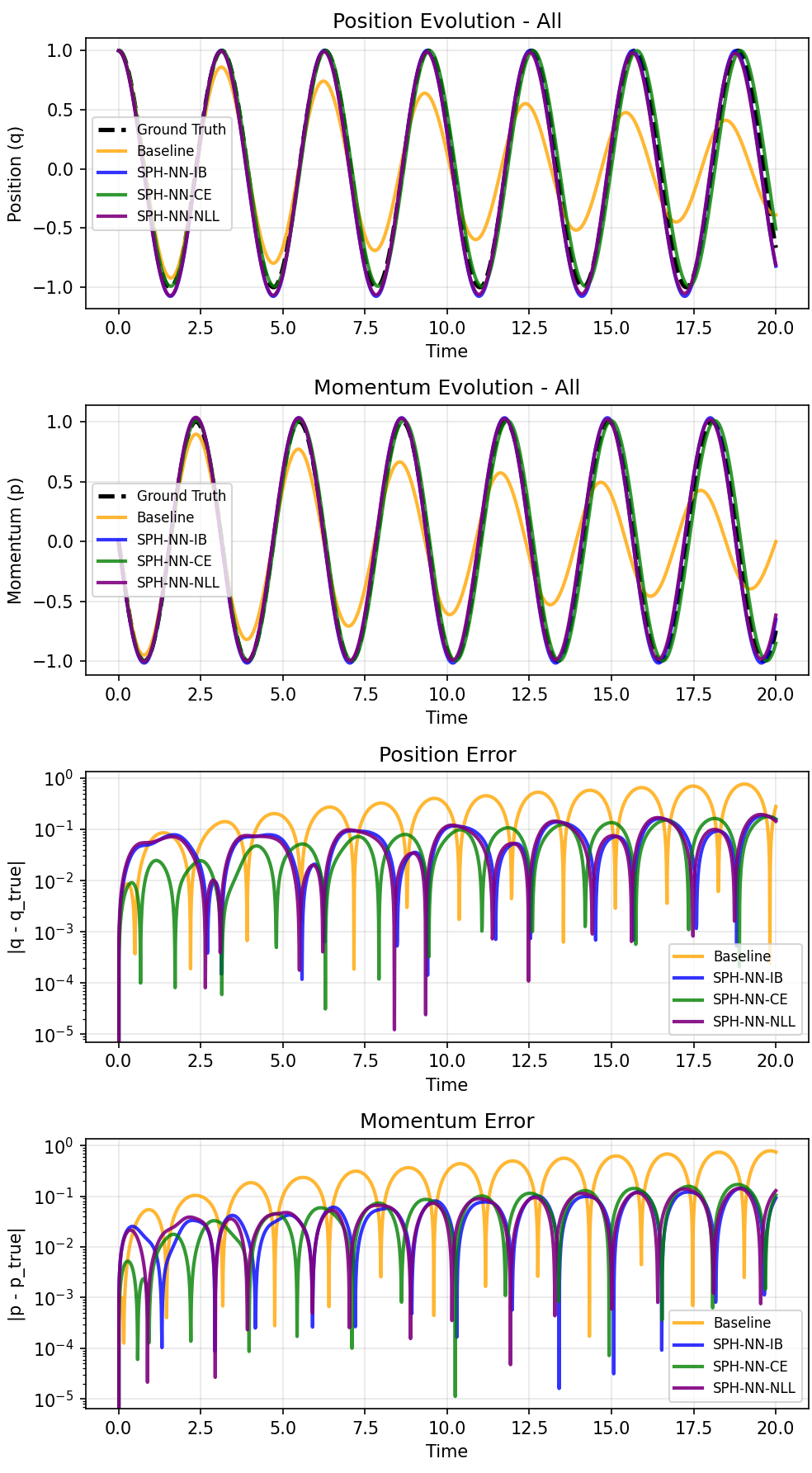}
    \caption{Position and momentum time evolution - Mass-spring oscillator: the baseline drifts and loses amplitude, while SPH methods track the true oscillation and keep errors much smaller.}
\end{figure}

\subsection{Duffing oscillator}
Consider the undamped Duffing system
\begin{equation*}
   \ddot{q}=q-q^3+F,
\end{equation*}
where $q$ is the position and $F$ the applied force.
Defining $p=\dot{q}$ as the momentum and $X=(q,p)$, $X$ defines a PHS with respect to the energy
\begin{equation*}
   H(q,p) = \frac{1}{2}p^2-\frac{1}{2}q^2+\frac{1}{4}q^4,
\end{equation*}
of the form
\begin{equation*}
\begin{cases}
\dot{X}&=J\partial_x H(X)+gF,\\
y&=g^\top\partial_x H(X)
\end{cases}
\end{equation*}
with
\begin{equation*}
  J=\begin{pmatrix}0&1\\-1&0\end{pmatrix},\quad 
  g=\begin{pmatrix}0\\1\end{pmatrix},\quad 
  \partial_x H(X)=\begin{pmatrix}-q+q^3\\p\end{pmatrix}.
\end{equation*}
We can generalize the system to a SPHS in It\^o form
\begin{equation*}
\begin{pmatrix}dq_t\\dp_t\end{pmatrix}=
\begin{pmatrix}p_t\\q_t-q_t^3+F\end{pmatrix}\,dt+
\sigma \,dW_t,
\end{equation*}
where $W_t$ is a standard two dimensional Brownian motion.

Figure~\ref{fig:phase_space_duffing} shows that SPH variants preserve the correct phase-space geometry and reduce long-horizon error growth. Table~\ref{tab:oscillator_results} indicates that SPH-NN-IB is the most accurate in this benchmark, reducing mean rollout errors by more than an order of magnitude relative to the baseline and reducing mean energy error from $0.083$ to $0.005$. 
\begin{figure}[htbp]
    \centering
    \includegraphics[width=0.5\textwidth]{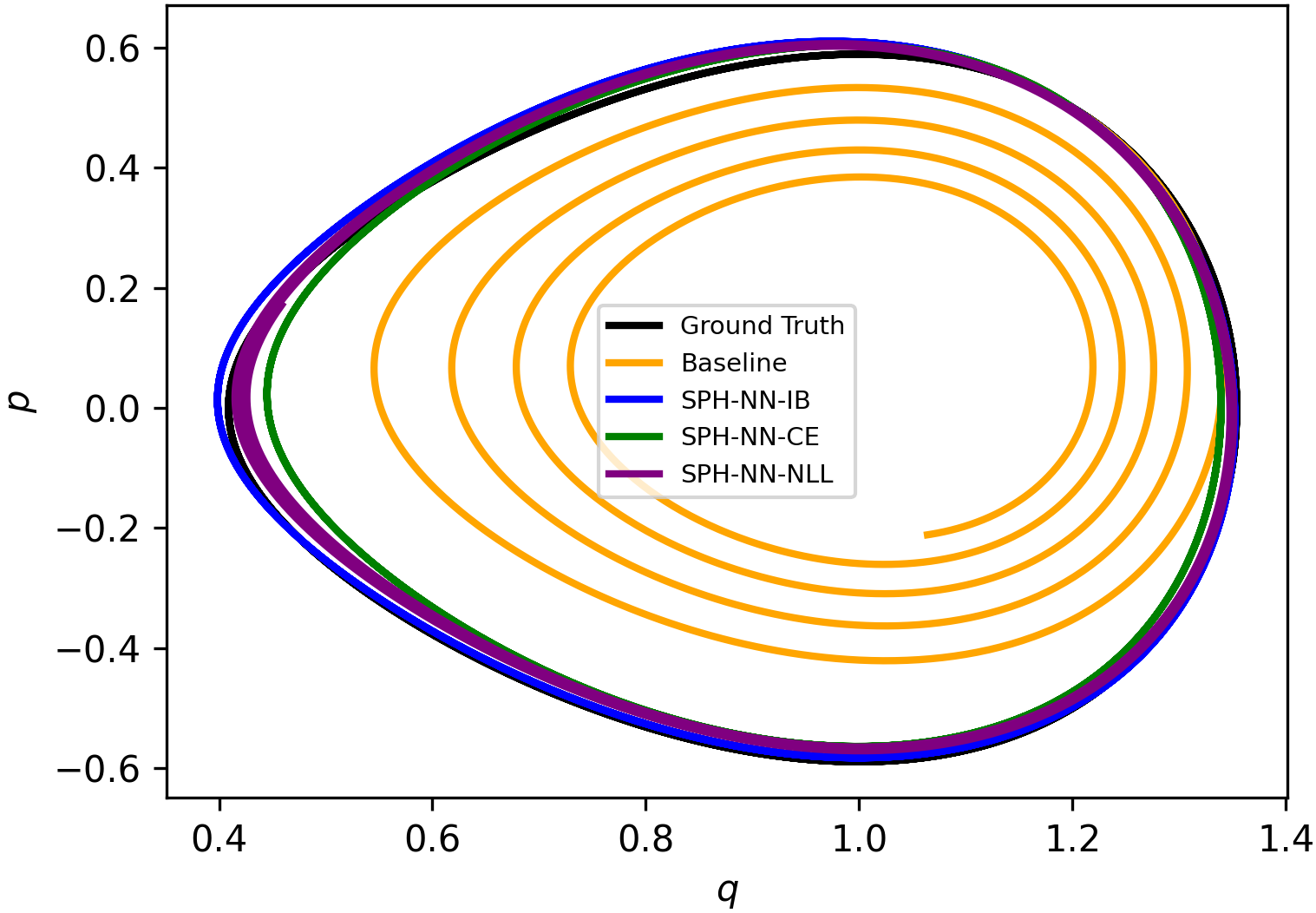}
    \caption{Phase space - Duffing oscillator: the baseline spirals inward, while SPH methods stay close to the true closed trajectory.}
    \label{fig:phase_space_duffing}
\end{figure}

\begin{figure}[htbp]
    \centering
    \includegraphics[width=0.5\textwidth]{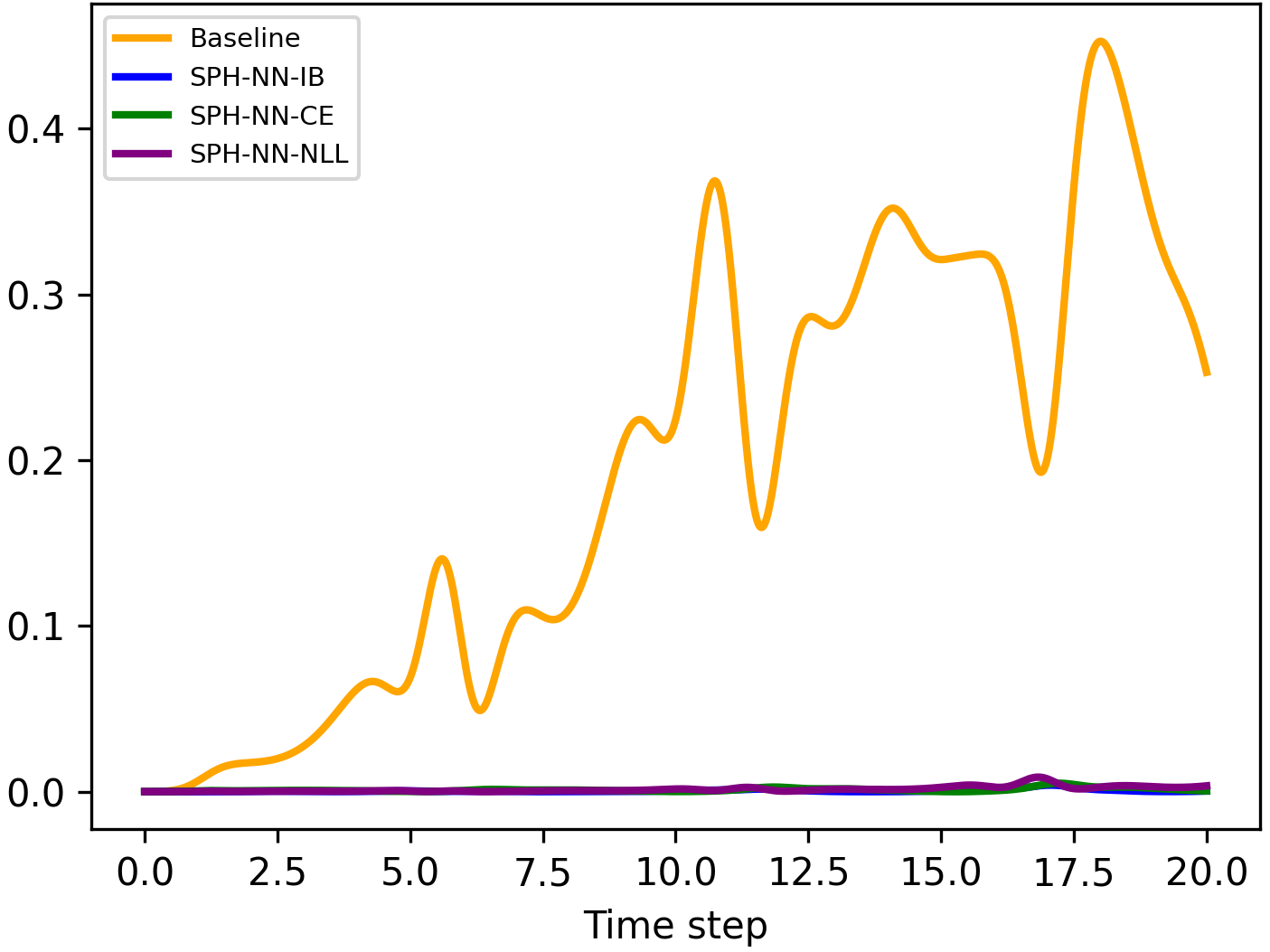}
    \caption{Mean squared error of the rollout - Duffing oscillator.}
\end{figure}

\begin{figure}[htbp]
    \centering
    \includegraphics[width=0.5\textwidth]{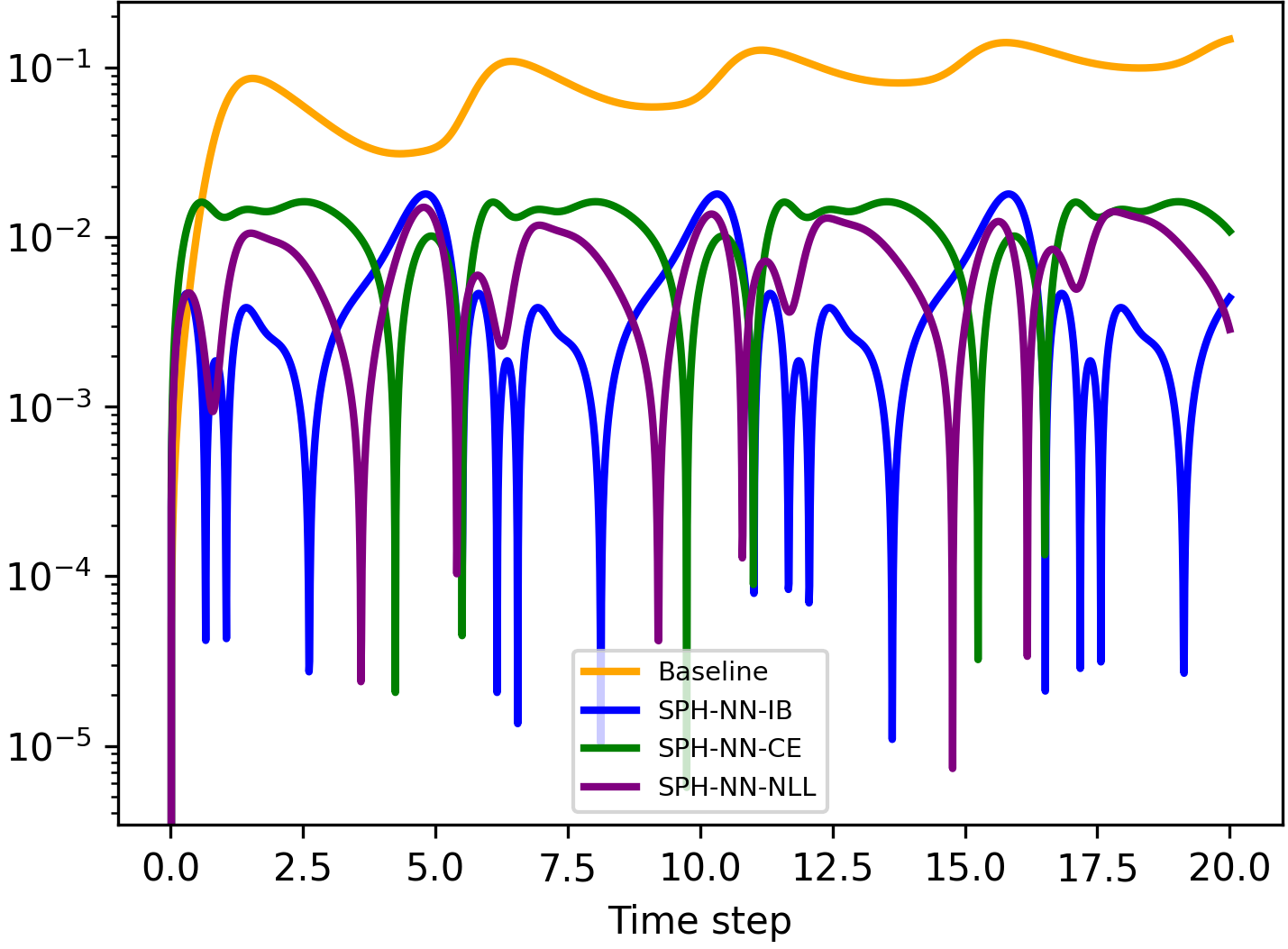}
    \caption{Energy error - Duffing oscillator.}
\end{figure}

\begin{figure}[htbp]
    \centering
    \includegraphics[width=0.5\textwidth]{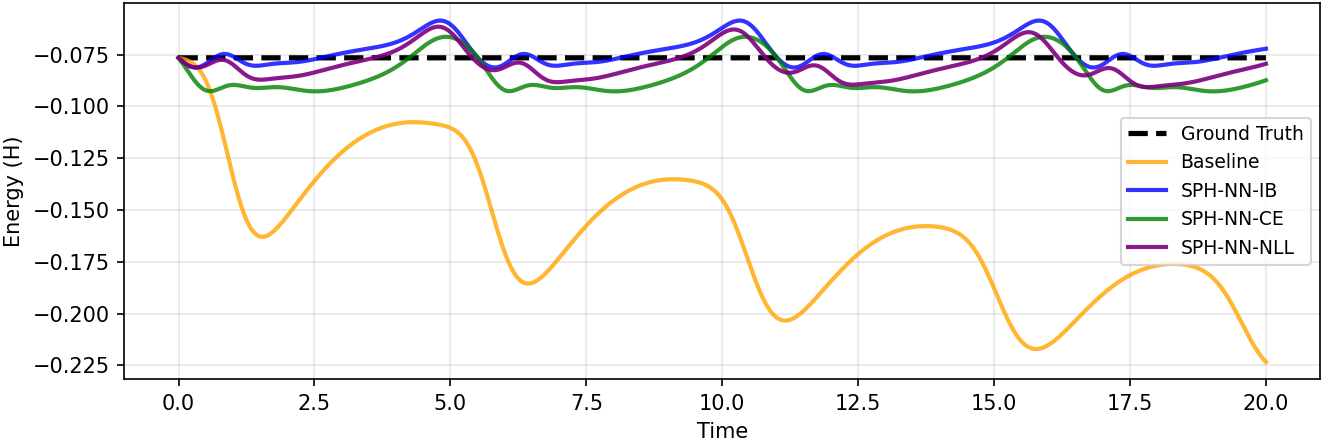}
    \caption{Energy evolution - Duffing oscillator.}
\end{figure}

\begin{figure}[htbp]
    \centering
    \includegraphics[width=0.5\textwidth]{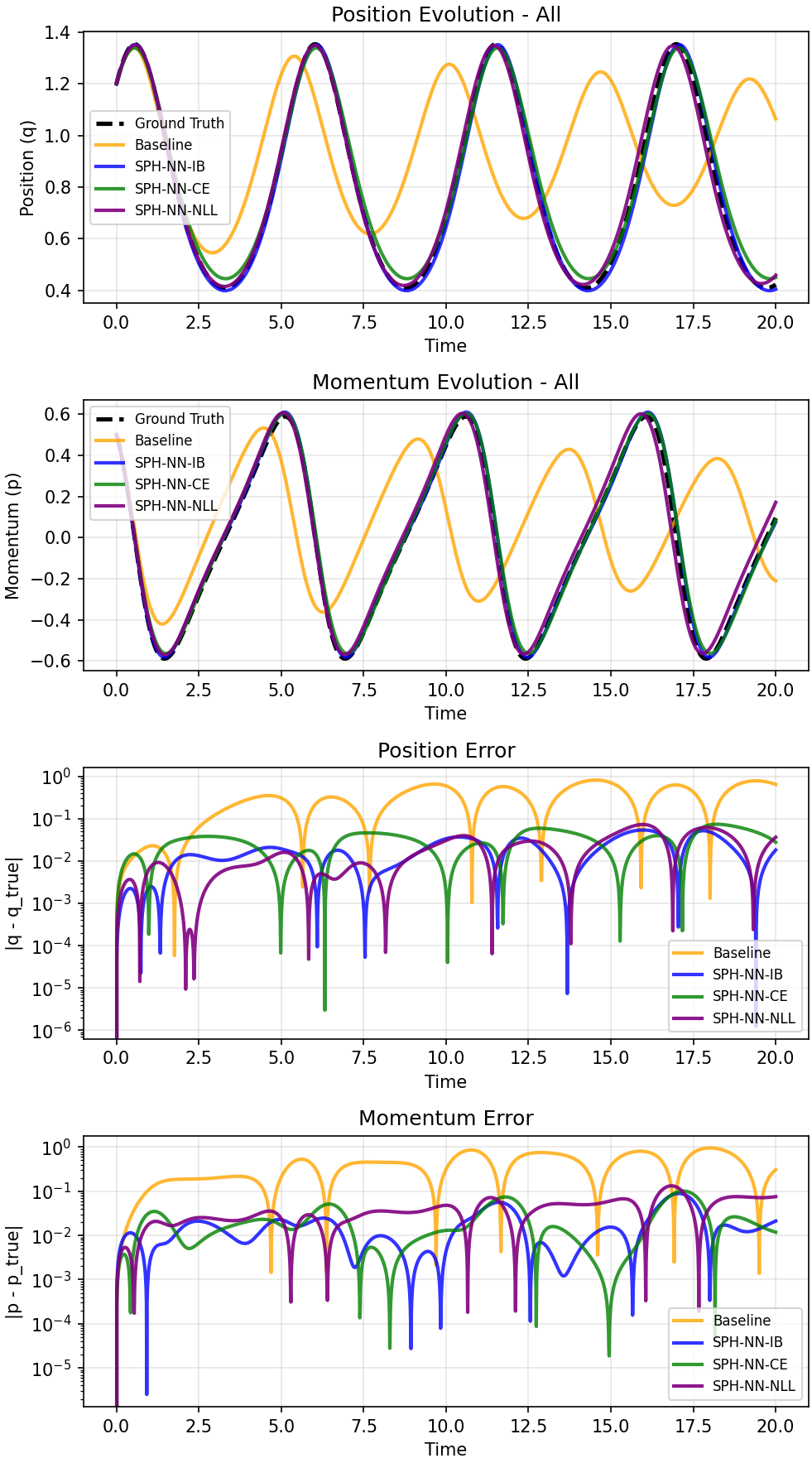}
    \caption{Position and momentum evolution - Duffing oscillator.}
\end{figure}

\subsection{Van der Pol oscillator}
In its standard 2D form, the Van der Pol oscillator is not conservative (hence not canonical Hamiltonian), but it can be written in a (stochastic) port-Hamiltonian form by choosing an energy-like storage function $H$ and encoding the nonconservative dynamics in the dissipation term $R$.
Consider a stochastic Van der Pol oscillator \cite{cordoni2022stochastic}
\begin{equation*}
\begin{cases}
dx_1(t)& = x_2(t) \, dt\\
dx_2(t)& = \Bigl( \mu\bigl(1 - x_1(t)^2\bigr) x_2(t) - x_1(t) \\
&\quad +\frac{1}{2} \xi(x_2(t)) \xi'(x_2(t)) \Bigr) \, dt  + \xi(x_2(t)) \, dW_t.
\end{cases}
\end{equation*}
Here $x(t)=(x_1(t),x_2(t))^\top\in\mathbb{R}^2$ is the state, $t\geq0$ is time, $\mu\in\mathbb{R}$ is the van der Pol parameter, 
$W_t$ is a one-dimensional Brownian motion, $\xi\colon\mathbb{R}\to\mathbb{R}$ is the diffusion coefficient (noise amplitude) acting on the second component.

Since the Van der Pol oscillator is neither conservative nor canonically Hamiltonian in two dimensions, the storage function used in the port-Hamiltonian representation is not uniquely defined. Following \cite{cordoni2022stochastic}, we fix:
\begin{equation*}
   H(X_t)=\frac{1}{2}X_t^\top I X_t,
\end{equation*}
where $X_t$ is the state variable and $I$ is the $2\times2$ identity matrix.

Figure~\ref{fig:phase_space_vdp} shows that the SPH variant learns the limit cycle and reduces long-horizon mismatch, whereas the baseline has phase and amplitude errors.
\begin{figure}[htbp]
    \centering
    \includegraphics[width=0.5\textwidth]{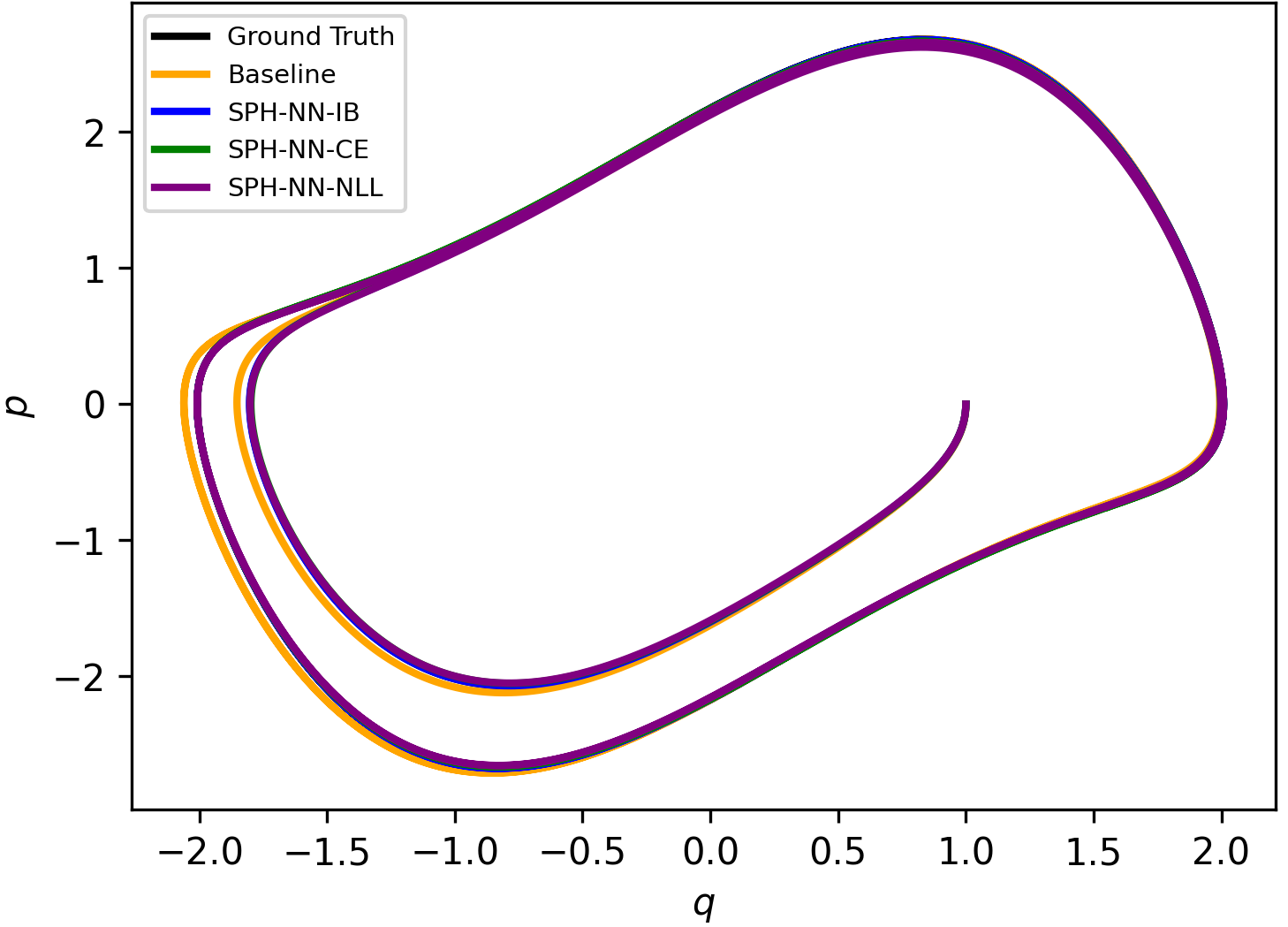}
    \caption{Phase space - Van der Pol.}
    \label{fig:phase_space_vdp}
\end{figure}

\begin{figure}[htbp]
    \centering
    \includegraphics[width=0.5\textwidth]{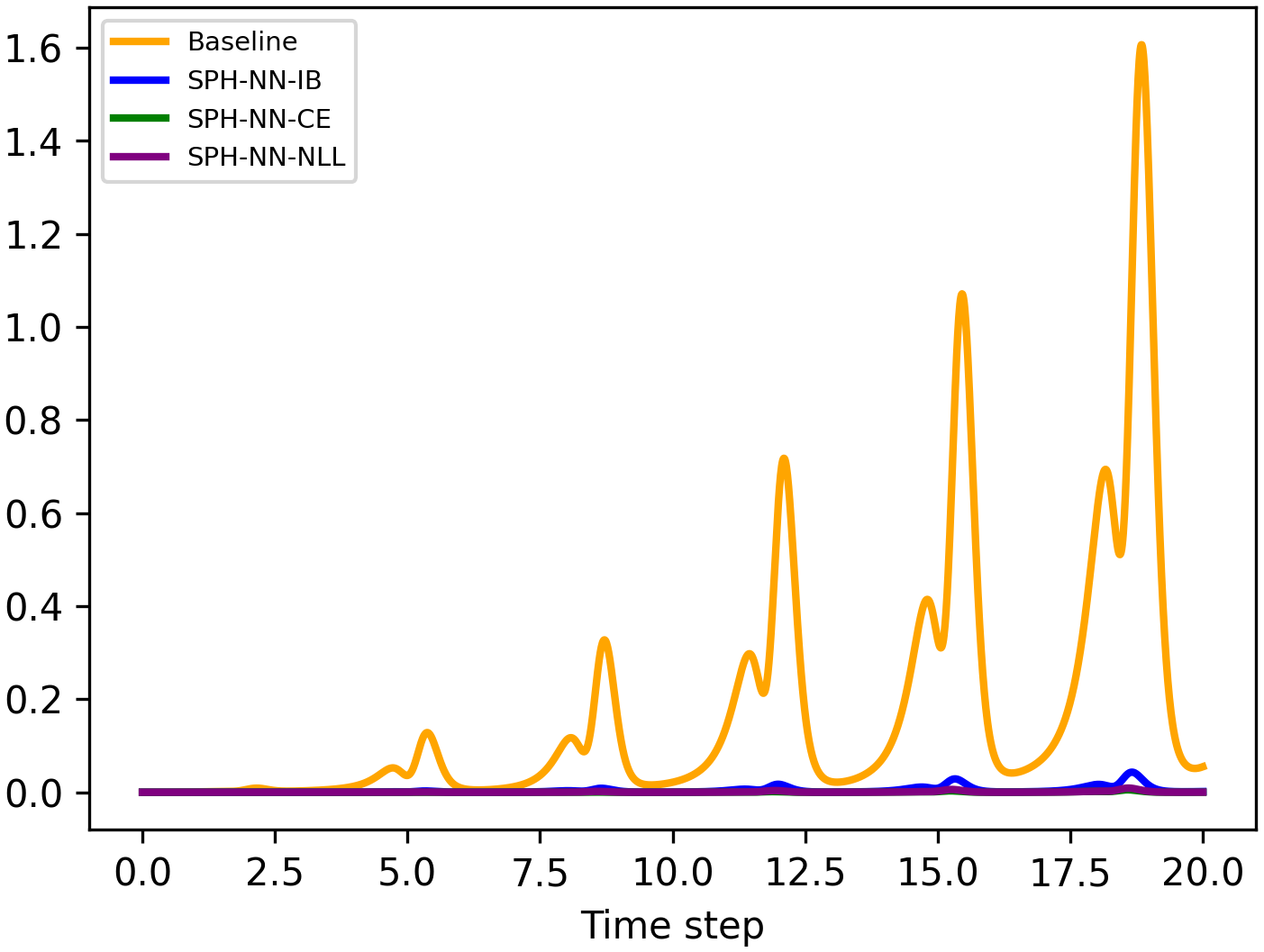}
    \caption{Mean squared error of the rollout - Van der Pol: the baseline error explodes with large spikes, while SPH methods keep the error low and stable across the rollout.}
\end{figure}

\begin{figure}[htbp]
    \centering
    \includegraphics[width=0.5\textwidth]{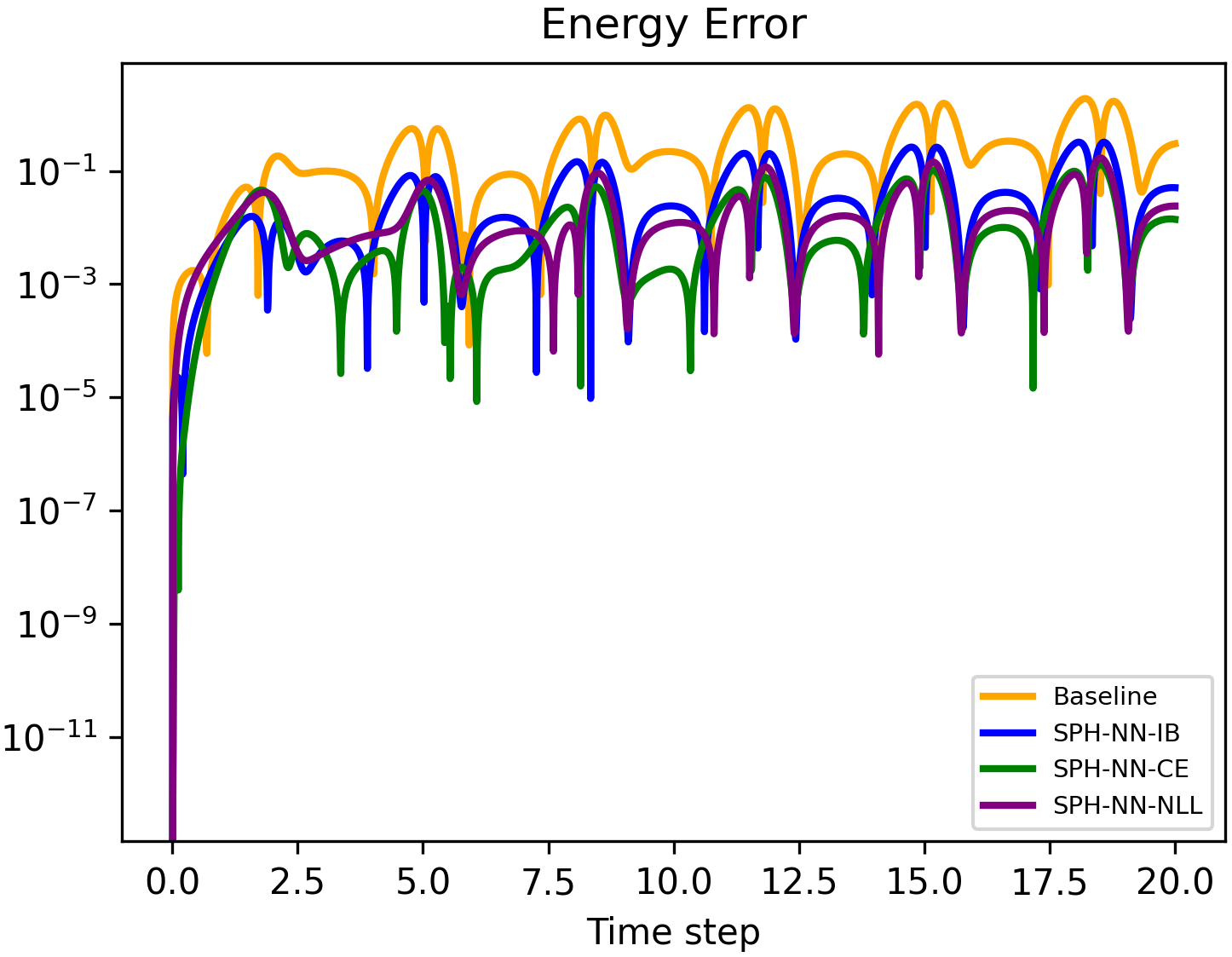}
    \caption{Energy error - Van der Pol oscillator.}
\end{figure}

\begin{figure}[htbp]
    \centering
    \includegraphics[width=0.5\textwidth]{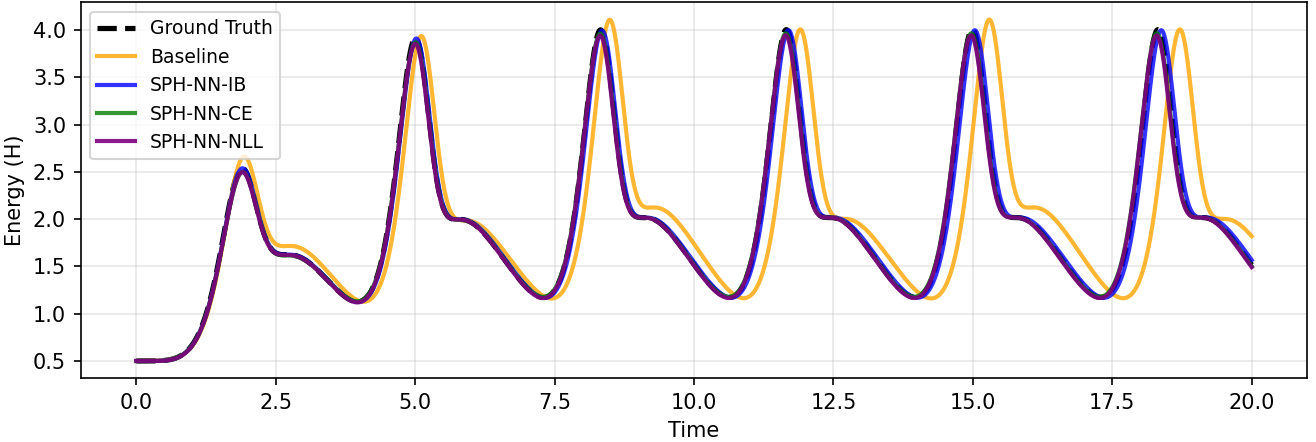}
    \caption{Energy evolution - Van der Pol oscillator: SPH methods match the true energy oscillations closely, while the baseline slowly drifts and becomes misaligned.}
    \label{fig:energy_evolution_vdp}
\end{figure}

\begin{figure}[htbp]
    \centering
    \includegraphics[width=0.5\textwidth]{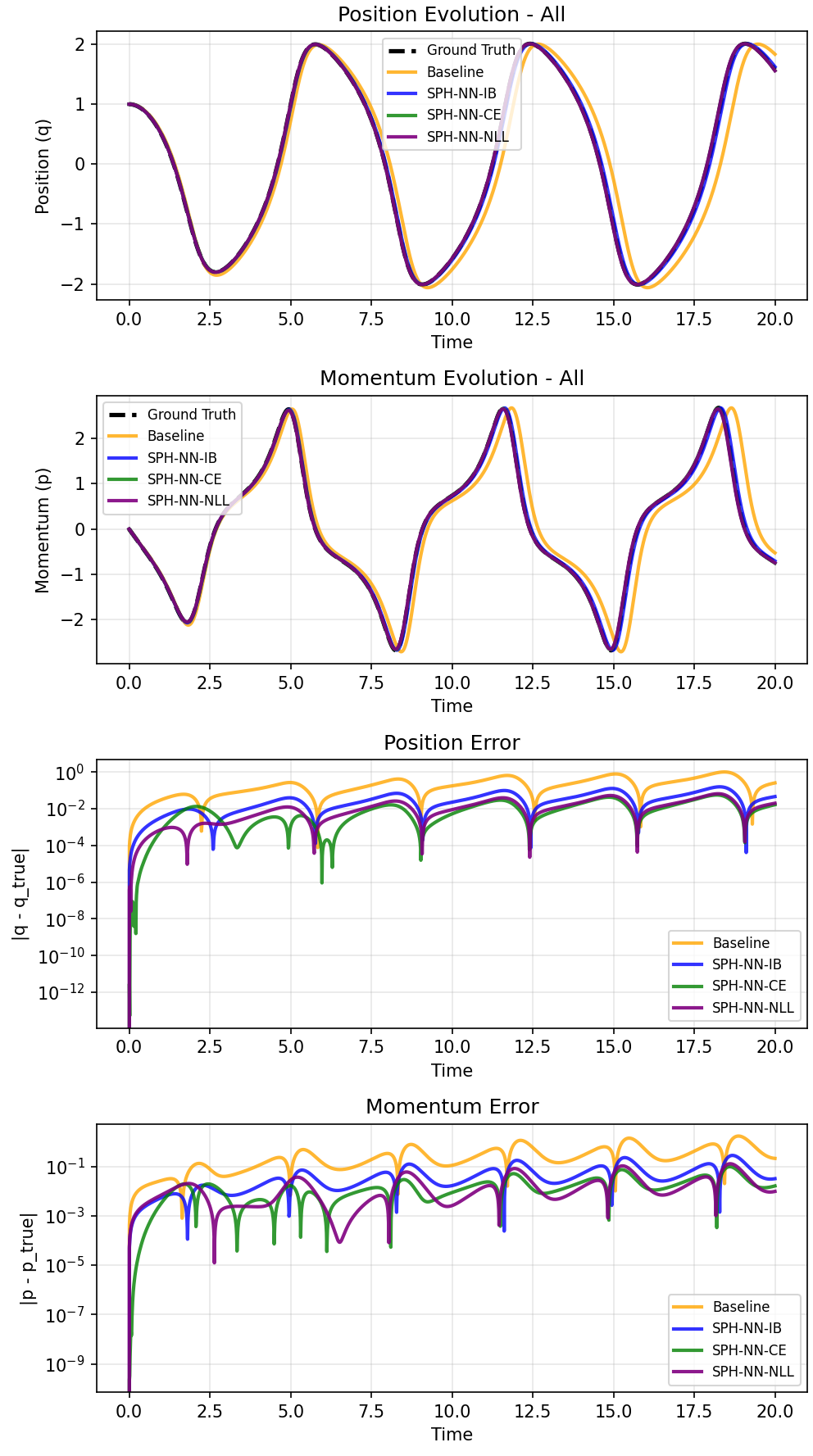}
    \caption{Position and momentum evolution - Van der Pol oscillator.}
    \label{fig:q_p_vdp}
\end{figure}


\section{Universal Approximation Theorem of SPHS}
In this section, we prove an approximation theorem on compact domains and finite horizons. 
More precisely, for any compact set $\mathcal{K}$ and $T > 0$, the SPH-NN parameterization can approximate the coefficients of a given SPHS uniformly on $\mathcal{K}$ (including $C^2$-accuracy of the Hamiltonian). 
Moreover, under a natural coupling driven by the same Brownian motion, the corresponding solution processes remain close up to the joint exit time from $\mathcal{K}$ (Theorem~\ref{thm:Structured_UAT_SPHS}). 
We then derive weak-passivity bounds in expectation via generator inequalities (Corollary~\ref{cor:Weak_passivity}).

\begin{thm}[\textbf{UAT for SPHS}]\label{thm:Structured_UAT_SPHS}
Fix $T > 0$. 
Let $\mathcal{K} \subset \mathbb{R}^n$ and $U \subset\mathbb{R}^m$ be compact, 
and let $u \in C([0,T],U)$. 
Let $g\colon\mathbb{R}^n \to \mathbb{R}^{n \times m}$ be continuous, locally Lipschitz and assumed known. 
Let $J,R\colon\mathbb{R}^n \to \mathbb{R}^{n \times n}$ and $\sigma\colon\mathbb{R}^n \to \mathbb{R}^{n\times d}$ be continuous, locally Lipschitz, and let $H \in C^2(\mathbb{R}^n)$.

Assume that for all $x \in \mathcal{K}$,
\begin{equation*}
  J(x) = - J(x)^\top,\; R(x)=R(x)^\top\succeq 0,
\end{equation*}
and assume there exist $r\in\mathbb{N}$ and a continuous map $D\colon\mathcal{K}\to\mathbb{R}^{r\times n}$ such that $R(x) = D(x)^\top D(x)$ for all $x \in \mathcal{K}$.
Assume the activation $G$ is $l$-finite in the sense of \cite{hornik1990universal} for some $l \in \mathbb{N}$ with $l > 2$.

Let $X$ be a strong solution on $[0,T]$ of
\begin{equation*}
\begin{aligned}
dX_t &= \Big(\bigl[J(X_t)-R(X_t)\bigr]\nabla H(X_t)+g(X_t)u_t\Big)\,dt \\
&\quad + \sigma(X_t)\,dW_t,
\qquad X_0=x_0\in\mathcal{K}.
\end{aligned}
\end{equation*}

Then, for any $\varepsilon>0$ and $\delta>0$ there exist feedforward neural networks
\begin{gather*}
\hat{A}\colon\mathbb{R}^n \to \mathbb{R}^{n \times n},\quad
\hat{D}\colon\mathbb{R}^n \to \mathbb{R}^{r \times n}, \\
\hat{H}\colon\mathbb{R}^n \to \mathbb{R},\quad
\hat{\sigma}\colon\mathbb{R}^n \to \mathbb{R}^{n \times d},
\end{gather*}
and defining
\begin{align*}
\mathcal{N}_\theta^J(x) &= \hat{A}(x)-\hat{A}(x)^\top,\quad
\mathcal{N}_\theta^R(x) = \hat{D}(x)^\top\hat{D}(x), \\
\mathcal{N}_\theta^H(x) &= \hat{H}(x),\quad
\mathcal{N}_\theta^\sigma(x) = \hat{\sigma}(x).
\end{align*}
The following hold:
\begin{enumerate}
\item $\mathcal{N}_\theta^J(x) = - (\mathcal{N}_\theta^J (x))^\top$ 
     and $\mathcal{N}_\theta^R(x) \succeq 0$ for all $x \in \mathbb{R}^n$, and
\begin{equation*}
\begin{split}
& \sup_{x \in \mathcal{K}}\Big(
\|\mathcal{N}_\theta^J(x)-J(x)\|
+ \|\mathcal{N}_\theta^R(x)-R(x)\| \\
& \quad + \|\mathcal{N}_\theta^\sigma(x)-\sigma(x)\|
\Big) + \|\mathcal{N}_\theta^H-H\|_{C^2(\mathcal{K})} \le\varepsilon.
\end{split}
\end{equation*}

\item Let $\widehat{X}$ be a strong solution on $[0,T]$ of
\begin{equation*}
\begin{aligned}
d\widehat{X}_t
&= \big((\mathcal{N}_\theta^J(\widehat{X}_t)-\mathcal{N}_\theta^R(\widehat{X}_t))
\nabla\mathcal{N}_\theta^H(\widehat{X}_t)+g(\widehat{X}_t)u_t\big)\,dt \\
&\quad + \mathcal{N}_\theta^\sigma(\widehat{X}_t)\,dW_t,
\quad \widehat{X}_0=x_0,
\end{aligned}
\end{equation*}
driven by the same $W$, and define
\begin{equation*}
\tau_{\mathcal{K}} = \inf\{t \geq 0\colon X_t \notin \mathcal{K}\text{ or }\widehat{X}_t \notin \mathcal{K}\}.
\end{equation*}
Then we have
\begin{equation*}
\mathbb{P}\Big(\sup_{0 \le t \le T\wedge\tau_{\mathcal{K}}}|X_t-\widehat{X}_t| \le \varepsilon\Big)\ge 1 - \delta.
\end{equation*}
\end{enumerate}
\end{thm}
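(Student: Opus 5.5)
The plan is to combine Lemma~\ref{lem:PH_structured_approx} (coefficient approximation with exact structure) with Proposition~\ref{prop:Stopped_SDE_stability} (stopped stability), choosing the approximation accuracy small enough that the probability bound of the proposition falls below $\delta$. Part~1 is immediate: apply Lemma~\ref{lem:PH_structured_approx} to the restrictions of $J$, $D$, $\sigma$ to $\mathcal{K}$ and to $H$, with some tolerance $\eta\le\varepsilon$ to be fixed later. The lemma gives $\hat{A},\hat{D},\hat{\sigma},\hat{H}$ such that $\mathcal{N}_\theta^J$ is skew, $\mathcal{N}_\theta^R\succeq0$ on all of $\mathbb{R}^n$, and the combined sup/$C^2$ error on $\mathcal{K}$ is at most $\eta\le\varepsilon$.

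For Part~2, set $b(t,x)=[J(x)-R(x)]\nabla H(x)+g(x)u_t$ and let $\hat b$ be the analogous network drift; both are continuous in $x$ and measurable in $t$. The Lipschitz hypothesis of Proposition~\ref{prop:Stopped_SDE_stability} concerns only the true coefficients on $\mathcal{K}$. Here is the main obstacle: $J$, $R$, $g$, $\sigma$ are locally Lipschitz, but $H$ is only $C^2$, so $\nabla H$ is $C^1$ and hence Lipschitz on the convex hull $\mathrm{co}(\mathcal{K})$ with constant $\sup_{\mathrm{co}(\mathcal{K})}\|\nabla^2H\|$. Products of bounded Lipschitz functions on the compact set $\mathcal{K}$ are Lipschitz, and $|u_t|\le\sup_U|u|$ is uniform in $t$, so $b$ admits a constant $L$ uniform in $t$. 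For $\nabla H$ I would apply the mean value theorem on $\mathrm{co}(\mathcal{K})$; for the locally Lipschitz factors, a standard compactness/covering argument yields a global constant on $\mathcal{K}$. If needed, $\mathcal{K}$ can be enlarged to a closed ball, since the statement only concerns paths before exit. Crucially, $L$ depends only on the true system and not on $\eta$.

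Next, bound the perturbation sizes $\alpha,\beta$ in terms of $\eta$. The $g u_t$ terms cancel. On $\mathcal{K}$,
\begin{equation*}
|b-\hat b|\le \|J-\hat J\|\,\|\nabla H\|+\|R-\hat R\|\,\|\nabla H\|+\|\hat J-\hat R\|\,\|\nabla H-\nabla\hat H\|.
\end{equation*}
Using $\|\hat J\|\le M_J+1$ and $\|\hat R\|\le M_R+1$ (for $\eta\le1$), this gives $\alpha\le C_1\eta$, where $C_1$ depends on $\sup_{\mathcal{K}}\|\nabla H\|$, $M_J$, and $M_R$. For the diffusion, $\beta\le\sqrt{\min(n,d)}\,\eta$, since the lemma controls the operator norm while the proposition uses the Frobenius norm. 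Hence $(\alpha+\beta)^2\le C_2\eta^2$.

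Finally, Proposition~\ref{prop:Stopped_SDE_stability} gives
\begin{equation*}
\mathbb{P}\Big(\sup_{t\le T\wedge\tau_{\mathcal{K}}}|X_t-\widehat X_t|>\varepsilon\Big)\le \frac{(4T^2+16T)e^{(4T+16)L^2T}C_2\eta^2}{\varepsilon^2}.
\end{equation*}
Choose $\eta\le\min\{1,\varepsilon\}$ small enough that the right-hand side is at most $\delta$, and take complements to conclude. One remaining caveat: the existence of the strong solution $\widehat X$ is assumed in the statement. The network coefficients are smooth, so they are locally Lipschitz, but this is not needed for the bound itself.
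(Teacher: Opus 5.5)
Your proposal is correct and follows the paper's proof essentially step for step. Like the paper, you apply Lemma~\ref{lem:PH_structured_approx} with a tolerance $\eta\le\min\{1,\varepsilon\}$, note that the Lipschitz constant $L$ of the true coefficients on $\mathcal{K}$ does not depend on $\eta$, bound $\alpha+\beta\le C\eta$, and shrink $\eta$ until the bound of Proposition~\ref{prop:Stopped_SDE_stability} is at most $\delta$. Your splitting of $b-\hat b$ and the constant $\sqrt{\min(n,d)}$ differ only cosmetically from the paper's $\|J-R\|\,\|\nabla H-\nabla\hat H\|+(\|\hat J-J\|+\|\hat R-R\|)\,\|\nabla\hat H\|$ and $\sqrt{d}$.

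The aside about enlarging $\mathcal{K}$ to a closed ball is unnecessary, since local Lipschitz continuity already yields a Lipschitz constant on the compact set $\mathcal{K}$. It would also not be admissible for the approximation step, because skew-symmetry of $J$ and the factorization $R=D^\top D$ are only assumed on $\mathcal{K}$.
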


\begin{pf}
Choose $\eta > 0$ such that $\eta \leq \varepsilon$ and $\eta \leq 1$ (specified below). 
Apply Lemma~\ref{lem:PH_structured_approx} on $\mathcal{K}$ with tolerance $\eta$ to $(J, D, \sigma, H)$ to obtain networks $\hat{A}$, $\hat{D}$, $\hat{H}$, $\hat{\sigma}$ such that item (1) holds with $\eta$ in place of $\varepsilon$, hence with $\varepsilon$ since $\eta \leq \varepsilon$.

Define
\begin{align*}
b(t,x)
&= (J(x)-R(x))\nabla H(x)+g(x)u_t, \\
\hat{b}(t,x)
&= (\mathcal{N}_\theta^J(x)-\mathcal{N}_\theta^R(x))
\nabla\mathcal{N}_\theta^H(x)+g(x)u_t.
\end{align*}
(Here $\|\cdot\|$ denotes the operator norm and $\|\cdot\|_F$ denotes the Frobenius norm.)

Since $J$, $R$, $g$, $\sigma$ are locally Lipschitz and $\nabla H$ is locally Lipschitz, their restrictions to the compact set $\mathcal{K}$ are Lipschitz and bounded. 
Since $u$ is bounded on $[0,T]$, there exists $L < \infty$ such that for all $x,y \in \mathcal{K}$ and $t \in [0,T]$,
\begin{equation*}
    |b(t,x) - b(t,y)| + \|\sigma(x) - \sigma(y)\|_F\le L|x-y|.
\end{equation*}
Set
\begin{equation*}
\alpha = \sup_{t \in [0,T]}\sup_{x \in \mathcal{K}}|b(t,x) - \hat{b}(t,x)|,\;
\beta = \sup_{x \in \mathcal{K}}\|\sigma(x) - \mathcal{N}_\theta^\sigma(x)\|_F.
\end{equation*}
Let
\begin{equation*}
M_{JR} = \sup_{x \in \mathcal{K}}\|J(x) - R(x)\|, \; 
M_1 = \sup_{x \in \mathcal{K}}\|\nabla H(x)\|.
\end{equation*}
For $x \in \mathcal{K}$, we obtain
\begin{equation*}
\begin{split}
|b(t,x)-\hat{b}(t,x)| &\leq \|J(x)-R(x)\|
\|\nabla H(x)-\nabla\mathcal{N}_\theta^H(x)\| \\
&+ \bigl(\|\mathcal{N}_\theta^J(x)-J(x)\|
+ \|\mathcal{N}_\theta^R(x)-R(x)\|\bigr) \\
&\quad \times \|\nabla\mathcal{N}_\theta^H(x)\|.
\end{split}
\end{equation*}

By Lemma~\ref{lem:PH_structured_approx} (with tolerance $\eta$), on $\mathcal{K}$ we have
\begin{gather*}
\|\nabla H(x)-\nabla\mathcal{N}_\theta^H(x)\| \leq \eta,\\
\|\mathcal{N}_\theta^J(x)-J(x)\|+\|\mathcal{N}_\theta^R(x)-R(x)\|\le\eta.
\end{gather*}
Also, using Lemma~\ref{lem:PH_structured_approx} and $\eta \le1$ gives
$\|\nabla \mathcal{N}_\theta^H(x)\| \leq \|\nabla H(x)\| + \eta \le M_1 + 1$, and hence
\begin{equation*}
\alpha \le(M_{JR}+M_1+1) \eta.
\end{equation*}
Moreover, Lemma~\ref{lem:PH_structured_approx} gives $\|\sigma(x)-\mathcal{N}_\theta^\sigma(x)\|\leq \eta$ on $\mathcal{K}$, and $\|A\|_F \le\sqrt{d} \|A\|$ for $A \in \mathbb{R}^{n \times d}$, so $\beta \le\sqrt{d}\eta$. 
Let
\begin{equation*}
C_1 = M_{JR} + M_1+1+\sqrt{d},
\end{equation*}
so $\alpha + \beta \leq C_1\eta$.

We apply Proposition~\ref{prop:Stopped_SDE_stability} to get
\begin{equation*}
\begin{aligned}
& \mathbb{P}\Big(
\sup_{0 \le t \le T\wedge\tau_{\mathcal{K}}}
|X_t-\widehat{X}_t| > \varepsilon
\Big) \\
&\leq \frac{(4T^2+16T)\exp((4T+16)L^2T)}{\varepsilon^2} \; \times (C_1\eta)^2.
\end{aligned}
\end{equation*}

Choose $\eta > 0$ such that $\eta \leq \varepsilon$, $\eta \leq 1$, and
\begin{equation*}
(4T^2+16T)\exp((4T+16)L^2T)(C_1\eta)^2 \leq \delta\varepsilon^2.
\end{equation*}
Then the probability above is at most $\delta$, which gives item (2).
\end{pf}

\begin{cor}[\textbf{Localized weak passivity}] \label{cor:Weak_passivity}
Assume the hypotheses of Theorem~\ref{thm:Structured_UAT_SPHS}. Define
\begin{equation*}
  y(x) = g(x)^\top \nabla H(x)
\end{equation*}
and assume there exists $c_0 \geq 0$ such that for all $x \in \mathcal{K}$,
\begin{multline*}
r(x) = \\
\frac{1}{2}\Tr(\sigma(x) \sigma(x)^\top\nabla^2H(x)) - \nabla H(x)^\top R(x)\nabla H(x) \leq c_0.
\end{multline*}
Fix $\varepsilon > 0$ and choose the networks from Theorem~\ref{thm:Structured_UAT_SPHS} with a coefficient tolerance $\eta > 0$ small enough. Next, define
\begin{equation*}
   \hat{y}(x) = g(x)^\top \nabla\mathcal{N}_\theta^H(x),
\end{equation*}
\begin{equation*}
\begin{split}
\hat{r}(x)
&= \frac{1}{2}\Tr\Big(
\mathcal{N}_\theta^\sigma(x)(\mathcal{N}_\theta^\sigma(x))^\top
\nabla^2\mathcal{N}_\theta^H(x)
\Big) \\
&\quad - \nabla\mathcal{N}_\theta^H(x)^\top
\mathcal{N}_\theta^R(x) \; \times \nabla\mathcal{N}_\theta^H(x).
\end{split}
\end{equation*}

Then:
\begin{enumerate}
\item $\hat{r}(x) \leq c_0 + \varepsilon$ for all $x \in \mathcal{K}$.
\item Let $\widehat{X}$ be the learned solution from Theorem~\ref{thm:Structured_UAT_SPHS} and let
\begin{equation*}
\tau_{\mathcal{K}}^{\widehat{X}} = \inf\{t \ge0 \colon\widehat{X}_t \notin \mathcal{K}\}.
\end{equation*}
Then for all $t \in [0,T]$,
\begin{equation*}
\begin{split}
\mathbb{E}\big[\mathcal{N}_\theta^H(\widehat{X}_{t\wedge\tau_{\mathcal{K}}^{\widehat{X}}})\big]
&\leq \mathcal{N}_\theta^H(x_0)
+ \mathbb{E}\int_0^{t\wedge\tau_{\mathcal{K}}^{\widehat{X}}}
u_s^\top\hat{y}(\widehat{X}_s)\,ds \\
&\quad + (c_0+\varepsilon)t.
\end{split}
\end{equation*}

\end{enumerate}
\end{cor}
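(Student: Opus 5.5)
The proof has two parts: a perturbation bound for item (1), then It\^{o}'s formula for the stopped learned process for item (2).

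\textbf{Item (1).} The plan is to write $\hat{r}-r$ as a sum of differences and bound each by the coefficient tolerance $\eta$ from Lemma~\ref{lem:PH_structured_approx} (as used in Theorem~\ref{thm:Structured_UAT_SPHS}). On $\mathcal{K}$, set
\[
M_\sigma=\sup_{\mathcal{K}}\|\sigma\|,\quad M_R=\sup_{\mathcal{K}}\|R\|,\quad M_1=\sup_{\mathcal{K}}\|\nabla H\|,\quad M_2=\sup_{\mathcal{K}}\|\nabla^2 H\|.
\]
All four are finite by continuity and compactness. With $\eta\le 1$ the network quantities obey analogous bounds: $\|\mathcal{N}^\sigma_\theta\|\le M_\sigma+1$ and so on.

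For the trace term, write
\[
\hat\sigma\hat\sigma^\top\nabla^2\hat H-\sigma\sigma^\top\nabla^2H=(\hat\sigma-\sigma)\hat\sigma^\top\nabla^2\hat H+\sigma(\hat\sigma-\sigma)^\top\nabla^2\hat H+\sigma\sigma^\top(\nabla^2\hat H-\nabla^2H).
\]
Then use $|\Tr(M)|\le n\|M\|$ for $M\in\mathbb{R}^{n\times n}$. This gives a bound $C_\sigma\eta$, with $C_\sigma$ depending only on $n,M_\sigma,M_2$.

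For the quadratic dissipation term, expand
\[
\hat v^\top\hat R\hat v-v^\top Rv=(\hat v-v)^\top\hat R\hat v+v^\top(\hat R-R)\hat v+v^\top R(\hat v-v),
\]
where $v=\nabla H$ and $\hat v=\nabla\mathcal{N}^H_\theta$. This gives a bound $C_R\eta$.

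Hence $\hat r\le r+(C_\sigma+C_R)\eta\le c_0+\varepsilon$ once $\eta\le \varepsilon/(C_\sigma+C_R)$. This is the meaning of ``$\eta$ small enough''. It is compatible with the choice made in Theorem~\ref{thm:Structured_UAT_SPHS}: take the minimum of the two constraints.

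\textbf{Item (2).} The network $\hat H$ is a composition of smooth-enough activations ($l$-finite with $l>2$ gives $C^2$), so $\mathcal{N}^H_\theta\in C^2(\mathbb{R}^n)$. Apply It\^{o}'s formula to $\mathcal{N}^H_\theta(\widehat X_{t\wedge\tau})$ with $\tau=\tau^{\widehat X}_{\mathcal{K}}$. The drift is
\[
\hat v^\top(\hat J-\hat R)\hat v+\hat v^\top g u_s+\tfrac12\Tr(\hat\sigma\hat\sigma^\top\nabla^2\hat H).
\]
The skew-symmetry of $\mathcal{N}^J_\theta$ kills $\hat v^\top\hat J\hat v$. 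The term $\hat v^\top g u_s$ equals $u_s^\top\hat y$. The remaining terms are exactly $\hat r(\widehat X_s)$.

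So $\mathcal{N}^H_\theta(\widehat X_{t\wedge\tau})=\mathcal{N}^H_\theta(x_0)+\int_0^{t\wedge\tau}\big(u_s^\top\hat y+\hat r\big)\,ds+M_t$, with $M_t=\int_0^{t\wedge\tau}\hat v^\top\hat\sigma\,dW_s$.

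\textbf{Main obstacle.} The step needing care is showing $\mathbb{E}M_t=0$. Before $\tau$, the state $\widehat X_s$ stays in $\mathcal{K}$ (by path continuity and closedness, $\widehat X_s\in\mathcal{K}$ for $s\le\tau$). So the integrand $\hat v^\top\hat\sigma$ is bounded by continuity on the compact set. The stopped stochastic integral is therefore a square-integrable martingale with zero mean, exactly as in the proof of Proposition~\ref{prop:Stopped_SDE_stability}.

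The integrand $u_s^\top\hat y$ is bounded, so its expectation is finite. Taking expectations and using item (1), $\hat r(\widehat X_s)\le c_0+\varepsilon$ for $s<\tau$, so
\[
\mathbb{E}\int_0^{t\wedge\tau}\hat r\,ds\le (c_0+\varepsilon)\,\mathbb{E}[t\wedge\tau]\le (c_0+\varepsilon)t,
\]
since $c_0+\varepsilon\ge0$. This yields the claimed inequality.
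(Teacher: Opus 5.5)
Your proposal is correct and follows the paper's proof essentially step by step: a perturbation bound $\sup_{\mathcal{K}}|\hat r - r|\le C\eta$ built from $M_\sigma, M_R, M_1, M_2$ and the trace bound $|\Tr(M)|\le n\|M\|$, then It\^{o}'s formula for the stopped process, cancellation of $\hat v^\top \mathcal{N}_\theta^J \hat v$ by skew-symmetry, and a zero-mean stopped martingale. Your three-term splittings of the trace and quadratic differences only regroup the paper's split, and you make explicit two points the paper leaves implicit: $l$-finiteness with $l>2$ gives $\mathcal{N}_\theta^H\in C^2(\mathbb{R}^n)$, and $c_0+\varepsilon\ge 0$ is what justifies replacing $\mathbb{E}[t\wedge\tau]$ by $t$.
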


\begin{pf}
Let $\eta$ be the coefficient tolerance on $\mathcal{K}$ from Theorem~\ref{thm:Structured_UAT_SPHS}, so on $\mathcal{K}$,
\begin{gather*}
\|\mathcal{N}_\theta^\sigma-\sigma\| \le\eta,\quad
\|\mathcal{N}_\theta^R-R\| \le\eta, \\
\|\nabla\mathcal{N}_\theta^H-\nabla H\| \le\eta,\quad
\|\nabla^2\mathcal{N}_\theta^H-\nabla^2 H\| \le\eta.
\end{gather*}
Define the finite constants
\begin{gather*}
M_\sigma=\sup_{x\in\mathcal{K}}\|\sigma(x)\|,\quad
M_R=\sup_{x\in\mathcal{K}}\|R(x)\|, \\
M_1=\sup_{x\in\mathcal{K}}\|\nabla H(x)\|,\quad
M_2=\sup_{x\in\mathcal{K}}\|\nabla^2 H(x)\|.
\end{gather*}

Then on $\mathcal{K}$,
\begin{gather*}
\|\mathcal{N}_\theta^\sigma\| \leq M_\sigma+\eta,\quad
\|\mathcal{N}_\theta^R\| \leq M_R+\eta, \\
\|\nabla\mathcal{N}_\theta^H\| \leq M_1+\eta,\quad
\|\nabla^2\mathcal{N}_\theta^H\| \leq M_2+\eta.
\end{gather*}

Moreover,
\begin{equation*}
\begin{split}
\|\mathcal{N}_\theta^\sigma(\mathcal{N}_\theta^\sigma)^\top-\sigma\sigma^\top\|
&\leq (\|\mathcal{N}_\theta^\sigma\|+\|\sigma\|)
\|\mathcal{N}_\theta^\sigma-\sigma\| \\
&\leq (2M_\sigma+\eta)\eta \; \text{ on }\mathcal{K}.
\end{split}
\end{equation*}

Using $|\Tr(AB)| \le n\|A\|\|B\|$, for $x \in\mathcal{K}$,
\begin{equation*}
\begin{split}
|\hat{r}(x)-r(x)| &\leq \frac{1}{2}\Big|
\Tr\big((\mathcal{N}_\theta^\sigma(\mathcal{N}_\theta^\sigma)^\top-\sigma\sigma^\top)
\nabla^2\mathcal{N}_\theta^H\big)\Big| \\
&\quad + \frac{1}{2}\Big|
\Tr\big(\sigma\sigma^\top(\nabla^2\mathcal{N}_\theta^H-\nabla^2 H)\big)\Big| \\
&\quad + \Big|
\nabla\mathcal{N}_\theta^H{}^\top\mathcal{N}_\theta^R\nabla\mathcal{N}_\theta^H
-\nabla H^\top R\nabla H
\Big|.
\end{split}
\end{equation*}

Also, on $\mathcal{K}$,
\begin{multline*}
\Big|\nabla\mathcal{N}_\theta^H{}^\top\mathcal{N}_\theta^R\nabla\mathcal{N}_\theta^H
-\nabla H^\top R\nabla H\Big| \\
\leq \eta\|\nabla\mathcal{N}_\theta^H\|^2 + \|R\|\|\nabla\mathcal{N}_\theta^H-\nabla H\|
\|\nabla\mathcal{N}_\theta^H+\nabla H\|.
\end{multline*}
Combining these bounds and using $\|\nabla\mathcal{N}_\theta^H\| \leq M_1 + \eta$ and $\|\nabla^2 \mathcal{N}_\theta^H\| \leq M_2 + \eta$ yields
\begin{equation*}
\sup_{x \in \mathcal{K}}|\hat{r}(x)-r(x)| \leq C_2 \eta,
\end{equation*}
for some finite constant $C_2$ depending only on $n$, $M_\sigma$, $M_R$, $M_1$, $M_2$. 
Choose $\eta$ small enough so that $C_2 \eta \leq \varepsilon$. 
Since $r(x) \leq c_0$ on $\mathcal{K}$, we get $\hat{r}(x) \leq c_0 + \varepsilon$ on $\mathcal{K}$, proving item (1).

Let $\hat{b}(t,x) = (\mathcal{N}_\theta^J(x) - \mathcal{N}_\theta^R(x)) \nabla\mathcal{N}_\theta^H(x) + g(x)u_t$. 
For $x \in \mathcal{K}$, the generator applied to $\mathcal{N}_\theta^H$ is given by
\begin{equation*}
\begin{split}
(\hat{\mathcal{L}}_t[\mathcal{N}_\theta^H])(x)
&= \nabla\mathcal{N}_\theta^H(x)^\top\hat{b}(t,x) \\
& \quad+\frac{1}{2} \Tr\bigl(\mathcal{N}_\theta^\sigma(x)(\mathcal{N}_\theta^\sigma(x))^\top\nabla^2\mathcal{N}_\theta^H(x)\bigr) \\
&= \nabla\mathcal{N}_\theta^H(x)^\top(\mathcal{N}_\theta^J ( x ) -\mathcal{N}_\theta^R(x)) \nabla \mathcal{N}_\theta^H(x) \\
&+ u_t^\top \hat{y}(x) +\frac{1}{2} \Tr\bigl(\mathcal{N}_\theta^\sigma (x)(\mathcal{N}_\theta^\sigma (x))^\top\nabla^2 \mathcal{N}_\theta^H(x)\bigr).
\end{split}
\end{equation*}
Since $\mathcal{N}_\theta^J(x)$ is skew-symmetric, we have 
\begin{equation*}
\nabla \mathcal{N}_\theta^H(x)^\top\mathcal{N}_\theta^J (x) \nabla\mathcal{N}_\theta^H (x) = 0,
\end{equation*}
hence
\begin{equation*}
(\hat{\mathcal{L}}_t[\mathcal{N}_\theta^H])(x) = u_t^\top \hat{y}(x) + \hat{r}(x) \leq u_t^\top\hat{y}(x) + c_0 + \varepsilon \quad \text{for } x \in \mathcal{K}.
\end{equation*}
Apply It\^o's formula to $\mathcal{N}_\theta^H(\widehat{X}_{t\wedge \tau_{\mathcal{K}}^{\widehat{X}}})$. 
Since the process is stopped on $\mathcal{K}$ and the coefficients are continuous,
the stochastic integral has zero expectation. Taking expectations gives
\begin{equation*}
\mathbb{E} [\mathcal{N}_\theta^H(\widehat{X}_{t \wedge \tau_{\mathcal{K}}^{\widehat{X}}})]
=\mathcal{N}_\theta^H (x_0) + \mathbb{E} \int_0^{t \wedge\tau_{\mathcal{K}}^{\widehat{X}}}(\hat{\mathcal{L}}_s[\mathcal{N}_\theta^H])(\widehat{X}_s) \, ds.
\end{equation*}
For $s < \tau_{\mathcal{K}}^{\widehat{X}}$, $\widehat{X}_s \in \mathcal{K}$, so $(\hat{\mathcal{L}}_s[\mathcal{N}_\theta^H])(\widehat{X}_s) \le u_s^\top\hat{y}(\widehat{X}_s) + c_0 + \varepsilon$. 
Since $t \wedge \tau_{\mathcal{K}}^{\widehat{X}} \le t$, we obtain item (2).
\end{pf}

\begin{rem}
Corollary~\ref{cor:Weak_passivity} is a localized statement: 
it applies to the stopped process $\widehat{X}_{t\wedge\tau_{\mathcal{K}}^{\widehat{X}}}$ and includes the additive term $(c_0+\varepsilon)t$, which accumulates linearly in time. 
It provides an a priori energy/passivity control on a prescribed finite horizon and within the region $\mathcal{K}$ where the model is required to operate. 
\end{rem}

\begin{cor}\label{cor:Weak_passivity_strict}
Assume the hypotheses of Theorem~\ref{thm:Structured_UAT_SPHS} and use the notation
of Corollary~\ref{cor:Weak_passivity}. Assume that there exists $\delta > 0$ such that for all $x \in \mathcal{K}$,
\begin{equation*}
r(x) = \frac{1}{2} \Tr(\sigma\sigma^\top\nabla^2H) (x) - \nabla H(x)^\top R(x) \nabla H(x) \leq - \delta.
\end{equation*}
Choose the networks from Theorem~\ref{thm:Structured_UAT_SPHS} with tolerance $\eta > 0$ small enough so that $\sup_{x \in \mathcal{K}}|\hat{r(}x) - r(x)| \le \frac{\delta}{2}$ (for example $C_2 \eta \leq \frac{\delta}{2}$
with $C_2$ as in the proof of Corollary~\ref{cor:Weak_passivity}). 
Then $\hat{r}(x) \le0$ on $\mathcal{K}$ and, for all $t \in [0,T]$,
\begin{equation*}
\mathbb{E} [\mathcal{N}_\theta^H( \widehat{X}_{t \wedge \tau_{\mathcal{K}}^{\widehat{X}}})]
\le \mathcal{N}_\theta^H (x_0)
+ \mathbb{E} \int_0^{t \wedge \tau_{\mathcal{K}}^{\widehat{X}}}u_s^\top \hat{y}(\widehat{X}_s) \, ds.
\end{equation*}
In particular, the learned SPHS is weakly passive on $\mathcal{K}$ (up to the exit time) in the sense of Definition~\ref{def:strong-weak-passivity}.
\end{cor}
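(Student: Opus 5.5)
The plan is to reuse the machinery of Corollary~\ref{cor:Weak_passivity} almost verbatim, replacing the generic tolerance $\varepsilon$ in its item (1) by the margin $\delta/2$. The estimate
\[
\sup_{x\in\mathcal{K}}|\hat{r}(x)-r(x)|\le C_2\eta
\]
was derived there only from the coefficient bounds of Lemma~\ref{lem:PH_structured_approx} on $\mathcal{K}$. It therefore holds for any networks furnished by Theorem~\ref{thm:Structured_UAT_SPHS} with tolerance $\eta$, with $C_2$ depending only on $n$, $M_\sigma$, $M_R$, $M_1$, $M_2$. First I will choose $\eta\le 1$ with $C_2\eta\le \delta/2$ (and $\eta\le\varepsilon$ if item (2) of the theorem is also wanted). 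Then for $x\in\mathcal{K}$,
\[
\hat{r}(x)\le r(x)+\tfrac{\delta}{2}\le -\tfrac{\delta}{2}\le 0,
\]
which gives the first claim.

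Next I will redo the generator computation from the proof of Corollary~\ref{cor:Weak_passivity}. Skew-symmetry of $\mathcal{N}_\theta^J$ holds on all of $\mathbb{R}^n$ by construction, so it kills the quadratic form $\nabla\mathcal{N}_\theta^{H\top}\mathcal{N}_\theta^J\nabla\mathcal{N}_\theta^H$. This yields $(\hat{\mathcal{L}}_s[\mathcal{N}_\theta^H])(x)=u_s^\top\hat{y}(x)+\hat{r}(x)$, and hence
\[
(\hat{\mathcal{L}}_s[\mathcal{N}_\theta^H])(x)\le u_s^\top\hat{y}(x) \quad\text{for } x\in\mathcal{K}.
\]
I will then apply Itô's formula to $\mathcal{N}_\theta^H(\widehat{X}_{t\wedge\tau_{\mathcal{K}}^{\widehat{X}}})$; this is legitimate since $\mathcal{N}_\theta^H$ is a smooth network, hence $C^2$. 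Up to the stopping time the integrand $\nabla\mathcal{N}_\theta^{H\top}\mathcal{N}_\theta^\sigma$ is evaluated at points of $\mathcal{K}$. That set is compact and the functions are continuous, so the integrand is bounded, the stopped stochastic integral is a true martingale, and its expectation vanishes.

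Taking expectations gives
\[
\mathbb{E}\big[\mathcal{N}_\theta^H(\widehat{X}_{t\wedge\tau})\big]=\mathcal{N}_\theta^H(x_0)+\mathbb{E}\int_0^{t\wedge\tau}\bigl(u_s^\top\hat{y}(\widehat{X}_s)+\hat{r}(\widehat{X}_s)\bigr)\,ds,
\]
where $\tau=\tau_{\mathcal{K}}^{\widehat{X}}$. On $\{s<\tau\}$ we have $\widehat{X}_s\in\mathcal{K}$, so $\hat{r}(\widehat{X}_s)\le 0$. Dropping this term gives the stated inequality, and no additive term appears because the bound on $\hat r$ is nonpositive rather than $c_0+\varepsilon$. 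The term $u_s^\top\hat{y}$ is integrable because $u$ is continuous on $[0,T]$ and $\hat{y}$ is bounded on $\mathcal{K}$.

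The final claim, weak passivity in the sense of Definition~\ref{def:strong-weak-passivity}, is the displayed inequality read for the stopped process, with $H$ replaced by $\mathcal{N}_\theta^H$ and the output by $\hat{y}$. Here $\widehat{X}_0=x_0$ is deterministic, so $\mathbb{E}[\mathcal{N}_\theta^H(\widehat{X}_0)]=\mathcal{N}_\theta^H(x_0)$. The only real point of care, and the main obstacle as I see it, is the first step. One must confirm that the constant $C_2$ from Corollary~\ref{cor:Weak_passivity} is uniform, depending only on the target coefficients and $n$ and not on the networks. This holds because it used only the a priori bounds $\|\nabla\mathcal{N}_\theta^H\|\le M_1+\eta$, $\|\nabla^2\mathcal{N}_\theta^H\|\le M_2+\eta$, $\|\mathcal{N}_\theta^\sigma\|\le M_\sigma+\eta$ together with $\eta\le 1$. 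Hence shrinking $\eta$ by itself achieves the $\delta/2$ margin.
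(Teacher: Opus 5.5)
Your proposal is correct and follows essentially the same route as the paper: choose $\eta$ so that $\sup_{x\in\mathcal{K}}|\hat{r}(x)-r(x)|\le\delta/2$, conclude $\hat{r}\le-\delta/2\le 0$ on $\mathcal{K}$, apply It\^o's formula to the stopped process, use that the stopped stochastic integral has zero mean, and drop the nonpositive $\hat{r}$ term. Your extra remarks simply make explicit details the paper leaves implicit: the uniformity of $C_2$ once $\eta\le 1$, the integrability of $u_s^\top\hat{y}$, and the boundedness of the stopped integrand.
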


\begin{pf}
By Theorem~\ref{thm:Structured_UAT_SPHS} and the estimate in the proof of
Corollary~\ref{cor:Weak_passivity}, choose $\eta > 0$ such that
$\sup_{x \in \mathcal{K}}|\hat{r}(x) - r(x)| \le \frac{\delta}{2}$.
Since $r(x) \le - \delta$ on $\mathcal{K}$, we obtain $\hat{r}(x) \le - \frac{\delta}{2} \le0$
for all $x\in\mathcal{K}$.
Apply It\^{o}'s formula to $\mathcal{N}_\theta^H(\widehat{X}_{t \wedge \tau_{\mathcal{K}}^{\widehat{X}}})$.
As in Corollary~\ref{cor:Weak_passivity}, the local martingale term has zero expectation, hence
\begin{equation*}
\begin{split}
\mathbb{E}\big[\mathcal{N}_\theta^H(\widehat{X}_{t \wedge \tau_{\mathcal{K}}^{\widehat{X}}}) \big]
& = \mathcal{N}_\theta^H(x_0)
+ \mathbb{E} \int_0^{t \wedge \tau_{\mathcal{K}}^{\widehat{X}}}u_s^\top \hat{y}(\widehat{X}_s) \, ds \\
&\qquad + \mathbb{E} \int_0^{t\wedge \tau_{\mathcal{K}}^{\widehat{X}}} \hat{r}( \widehat{X}_s) \,ds.
\end{split}
\end{equation*}
For $s \le\tau_{\mathcal{K}}^{\widehat{X}}$ we have $\widehat{X}_s \in \mathcal{K}$, hence
$\hat{r}(\widehat{X}_s) \le0$ and therefore the last term is $\le0$. 
Dropping it proves the claim.
\end{pf}

\begin{prop}
\label{prop:Weak_passivity_global}
Assume the hypotheses of Theorem~\ref{thm:Structured_UAT_SPHS} and let $\widehat{X}$ denote the corresponding learned SPHS solution on $[0,T]$ with deterministic initial condition $x_0$.
Let
\begin{equation*}
\begin{cases}
\hat{y}(x) &:= g(x)^\top \nabla \mathcal{N}_\theta^H(x), \\[6pt]
\hat{r}(x) &:= \frac{1}{2} \Tr\Big(\mathcal{N}_\theta^\sigma(x)\big(\mathcal{N}_\theta^\sigma(x) \big)^\top \nabla^2 \mathcal{N}_\theta^H(x)\Big)\\[6pt]
&\qquad - \nabla \mathcal{N}_\theta^H(x)^\top \mathcal{N}_\theta^R (x) \nabla \mathcal{N}_\theta^H(x)
\end{cases}
\end{equation*}
Assume that $\mathcal{N}_\theta^H \in C^2(\mathbb{R}^n)$ and that the following two conditions hold:
\begin{enumerate}
\item[(i)] $\hat{r}(x)\le0$ for all $x\in\mathbb{R}^n$.
\item[(ii)] For every $t\in[0,T]$,

\begin{equation*}
\begin{cases}
&\mathbb{E}\int_0^t \Big\|\big(\nabla\mathcal{N}_\theta^H(\widehat{X}_s)\big)^\top
\mathcal{N}_\theta^\sigma(\widehat{X}_s)\Big\|^2\,ds<\infty,\\[6pt]
&\mathbb{E}\int_0^t \big|u_s^\top \hat{y}(\widehat{X}_s)\big|\,ds < \infty,
\end{cases}
\end{equation*}
and $\mathbb{E} \big[|\mathcal{N}_\theta^H(\widehat{X}_t)|\big] < \infty$.
\end{enumerate}
Then, for all $t \in [0,T]$,
\begin{equation*}
\mathbb{E}\big[ \mathcal{N}_\theta^H(\widehat{X}_t) \big]
\leq \mathcal{N}_\theta^H(x_0) + \mathbb{E}\int_0^{t} u_s^\top \hat{y}(\widehat{X}_s) \, ds.
\end{equation*}
In particular, the learned SPHS is weakly passive without localization (on $[0,T]$) in the sense of Definition~\ref{def:strong-weak-passivity} (with storage $\mathcal{N}_\theta^H$ and output $\hat{y}$).
If the assumptions hold for every finite horizon $T > 0$, then the same inequality holds for all $t \ge0$.
\end{prop}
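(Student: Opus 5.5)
The plan is to apply It\^o's formula to $\mathcal{N}_\theta^H(\widehat{X}_t)$ on all of $\mathbb{R}^n$, which is allowed because $\mathcal{N}_\theta^H\in C^2(\mathbb{R}^n)$. This gives, for $t\in[0,T]$,
\begin{equation*}
\mathcal{N}_\theta^H(\widehat{X}_t)=\mathcal{N}_\theta^H(x_0)+\int_0^t(\hat{\mathcal{L}}_s[\mathcal{N}_\theta^H])(\widehat{X}_s)\,ds+\int_0^t \nabla\mathcal{N}_\theta^H(\widehat{X}_s)^\top\mathcal{N}_\theta^\sigma(\widehat{X}_s)\,dW_s .
\end{equation*}
The generator computation from the proof of Corollary~\ref{cor:Weak_passivity} uses only the skew-symmetry of $\mathcal{N}_\theta^J$, which by Theorem~\ref{thm:Structured_UAT_SPHS}(1) holds on all of $\mathbb{R}^n$. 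Hence it carries over verbatim and gives $(\hat{\mathcal{L}}_s[\mathcal{N}_\theta^H])(x)=u_s^\top\hat{y}(x)+\hat{r}(x)$ for every $x\in\mathbb{R}^n$. By (i), the drift part is therefore bounded above pathwise by $\int_0^t u_s^\top\hat{y}(\widehat{X}_s)\,ds$.

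Next we take expectations. The first integrability condition in (ii) says the integrand of the stochastic integral lies in $L^2(\Omega\times[0,t])$. By the It\^o isometry, the stochastic integral is then a true square-integrable martingale with zero mean; no localization is needed. The second condition in (ii) makes $\mathbb{E}\int_0^t u_s^\top\hat{y}(\widehat{X}_s)\,ds$ finite. Finally, $\mathbb{E}|\mathcal{N}_\theta^H(\widehat{X}_t)|<\infty$ makes the left side integrable. Together these give $\mathbb{E}\int_0^t \hat{r}(\widehat{X}_s)\,ds = \mathbb{E}[\mathcal{N}_\theta^H(\widehat{X}_t)]-\mathcal{N}_\theta^H(x_0)-\mathbb{E}\int_0^t u_s^\top\hat{y}\,ds$, a well-defined quantity in $[-\infty,0]$. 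Taking expectations in the It\^o identity and dropping this nonpositive term yields the claimed inequality.

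The main obstacle is justifying the expectation bookkeeping when $\hat{r}$ is only bounded above, since $\int_0^t\hat{r}(\widehat{X}_s)\,ds$ could have infinite negative mean. We avoid this because the identity expresses that integral as a combination of integrable quantities, so it is in fact integrable. The inequality then follows by monotonicity. Alternatively, one can localize with $\tau_k=\inf\{t:|\widehat{X}_t|\ge k\}$. Using Corollary~\ref{cor:Weak_passivity_strict}-type reasoning on balls, one obtains the stopped inequality and passes $k\to\infty$ via the $L^2$ martingale convergence and dominated convergence for the port term. For the left side, the identity itself controls the stopped value, so the limit is handled as well.

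The statement about weak passivity in the sense of Definition~\ref{def:strong-weak-passivity} is then immediate with storage $\mathcal{N}_\theta^H$ and output $\hat{y}$, since $X_0=x_0$ is deterministic. If the hypotheses hold for every $T$, any $t\ge0$ lies in some $[0,T]$, so the same inequality holds for all $t\ge0$.
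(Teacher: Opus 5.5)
Your proposal is correct and follows essentially the same route as the paper's proof: It\^o's formula for $\mathcal{N}_\theta^H(\widehat{X}_t)$ on all of $\mathbb{R}^n$, skew-symmetry of $\mathcal{N}_\theta^J$ (which holds globally) reducing the drift to $u_s^\top\hat{y}+\hat{r}$, condition (ii) making the stochastic integral a true square-integrable martingale, integrability of $\int_0^t\hat{r}(\widehat{X}_s)\,ds$ read off from the identity, and then discarding that nonpositive term. One small slip: you place that expectation in $[-\infty,0]$, but as you then argue yourself it is finite, so it lies in $(-\infty,0]$; the localization alternative is not needed.
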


\begin{pf}
By It\^o's formula and the SPHS structure,
\begin{equation*}
   \mathcal{N}_\theta^H(\widehat{X}_t) = \mathcal{N}_\theta^H(x_0) 
+ \int_0^t u_s^\top \hat{y}(\widehat{X}_s)\, ds 
+ \int_0^t\hat{r}(\widehat{X}_s)\, ds + M_t,
\end{equation*}
where
\begin{equation*}
   M_t = \int_0^t \big(\nabla \mathcal{N}_\theta^H(\widehat{X}_s)\big)^\top
       \mathcal{N}_\theta^\sigma (\widehat{X}_s)\, dW_s.
\end{equation*}
Assumption (ii) implies that $M_t$ is a square-integrable martingale, hence $\mathbb{E}[M_t] = 0$ and $\mathbb{E}[|M_t|] \le (\mathbb{E}[M_t^2])^{\frac{1}{2}} < \infty$. 
Moreover, by (ii),
$\mathbb{E} \int_0^t|u_s^\top \hat{y}(\widehat{X}_s)| \,ds < \infty$
and $\mathbb{E} [|\mathcal{N}_\theta^H(\widehat{X}_t)|] < \infty$, so

\begin{align*}
\mathbb{E}\Big[\Big|\int_0^t\hat{r}(\widehat{X}_s)\,ds\Big|\Big]
&\le \mathbb{E}[|\mathcal{N}_\theta^H(\widehat{X}_t)|]+|\mathcal{N}_\theta^H(x_0)|\\
&+\mathbb{E}\int_0^t|u_s^\top\hat{y}(\widehat{X}_s)|\,ds + \mathbb{E}[|M_t|]<\infty.
\end{align*}

Taking expectations in the It\^o identity yields
\begin{equation*}
   \mathbb{E}[\mathcal{N}_\theta^H(\widehat{X}_t)] 
   = \mathcal{N}_\theta^H(x_0) + \mathbb{E} \int_0^t u_s^\top \hat{y}(\widehat{X}_s)\, ds + \mathbb{E} \int_0^t\hat{r}(\widehat{X}_s)\,ds.
\end{equation*}
Using (i) we have $\hat{r}(\widehat{X}_s) \le0$ a.s.\ for all $s$, hence $\int_0^t \hat{r}(\widehat{X}_s)\,ds \le0$ a.s.\ and therefore $\mathbb{E} \int_0^t\hat{r}(\widehat{X}_s)\, ds \le0$. 
Discarding this non-positive term yields the claim.
\end{pf}


\section{Discussion}\label{sec:discussion}
Numerical evidence indicates that incorporating the complete port-Hamiltonian structure into the learning architecture provides clear practical benefits. 
Over extended periods, energy remains within one order of magnitude of the actual value, phase-space trajectories remain on the correct invariant manifold, and the model is less susceptible to noise than an unconstrained multilayer perceptron baseline.

However, several challenges remain.  
Training costs increase disproportionately with state dimension because each iteration requires Hessian-vector products. Scalable parameterizations or sparsity priors are necessary for high-dimensional systems. 
Additionally, the present work assumes full-state observations. 
Extending the formulation to partial and irregular data will likely necessitate a latent SDE approach. 
Finally, although the model is less fragile than unconstrained baselines, careful tuning of learning rates and loss weights is still necessary. This suggests that automatic hyperparameter scheduling is a promising area for future research.

\section{Conclusion}\label{sec:conclusion}
This paper introduces a unified framework for stochastic port-Hamiltonian neural networks, combining data-driven learning with a structure-preserving parameterization of stochastic port-Hamiltonian systems. 
The Hamiltonian, interconnection, dissipation, and diffusion components can, in principle, be learned from data. 
In our experiments, we fix the analytical interconnection $J$ (and $g$, if applicable) and focus on learning $H_\theta$, though the theory covers the general case in which all coefficients are learned.
A universal approximation theorem demonstrates the expressive power of the architecture. Experiments on noisy mass-spring, Duffing, and Van der Pol oscillators demonstrate an order-of-magnitude improvement in long-term accuracy compared to a baseline multilayer perceptron.

These results suggest that combining geometric structure with neural function approximation is a viable approach for creating reliable, data-driven models of complex physical systems subject to uncertainty. 
These results also motivate future work on scalable variants, partial-observation settings, and integration with feedback control.

\begin{ack}   
M.~Ehrhardt acknowledges funding by the Deutsche Forschungsgemeinschaft (DFG, German Research Foundation) – Project-ID 531152215 – CRC 1701.
\end{ack}

\bibliographystyle{plain}        
\bibliography{references}           


                  
\end{document}